\newcommand{\xhdr}[1]{\textbf{#1}}
\DeclareMathOperator*{\argmin}{arg\,min}
\newcommand{\E}{\mathop{\mathbb{E}}}
\newcommand{\R}{\mathbb{R}}
\newcommand{\X}{\mathbf{x}}
\newcommand{\grad}{\nabla_{\xi}}
\newcommand{\op}{\oplus}
\newcommand{\LIME}{\texttt{LIME}\xspace}
\newcommand{\CLIME}{\texttt{C-LIME}\xspace}
\newcommand{\SHAP}{\texttt{SHAP}\xspace}
\newcommand{\KSHAP}{\texttt{KernelSHAP}\xspace}
\newcommand{\Occlusion}{\texttt{Occlusion}\xspace}
\newcommand{\Grads}{\texttt{Vanilla Gradients}\xspace}
\newcommand{\SmoothGrad}{\texttt{SmoothGrad}\xspace}
\newcommand{\IntGrad}{\texttt{Integrated Gradients}\xspace}
\newcommand{\GradtimesInput}{\texttt{Gradient x Input}\xspace}
\newtheorem{prop}{Proposition}
\newtheorem{defn}{Definition}
\newtheorem{thm}{Theorem}
\newtheorem*{thm*}{Theorem}
\newtheorem{remark}{Remark}
\title{Which Explanation Should I Choose? \\A Function Approximation Perspective to Characterizing Post Hoc Explanations}
\author{%
  Tessa Han \\
  Harvard University\\
  Cambridge, MA \\
  \texttt{than@g.harvard.edu}
  \And
  Suraj Srinivas \\
  Harvard University\\
  Cambridge, MA \\
  \texttt{ssrinivas@seas.harvard.edu}
  \And
  Himabindu Lakkaraju \\
  Harvard University\\
  Cambridge, MA \\
  \texttt{hlakkaraju@hbs.edu}
  % \AND
  % Coauthor \\
  % Affiliation \\
  % Address \\
  % \texttt{email} \\
  % \And
  % Coauthor \\
  % Affiliation \\
  % Address \\
  % \texttt{email} \\
  % \And
  % Coauthor \\
  % Affiliation \\
  % Address \\
  % \texttt{email} \\
}
\begin{document}

\maketitle

\vspace{-0.1cm}
\begin{abstract}
A critical problem in the field of post hoc explainability is the lack of a common foundational goal among methods. For example, some methods are motivated by function approximation, some by game theoretic notions, and some by obtaining clean visualizations. This fragmentation of goals causes not only an inconsistent conceptual understanding of explanations but also the practical challenge of not knowing which method to use when.

In this work, we begin to address these challenges by unifying eight popular post hoc explanation methods (LIME, C-LIME, KernelSHAP, Occlusion, Vanilla Gradients, Gradients $\times$ Input, SmoothGrad, and Integrated Gradients). We show that these methods all perform local function approximation of the black-box model, differing only in the neighbourhood and loss function used to perform the approximation. This unification enables us to (1) state a \textit{no free lunch theorem for explanation methods}, demonstrating that no method can perform optimally across all neighbourhoods, and (2) provide a \textit{guiding principle} to choose among methods based on faithfulness to the black-box model. We empirically validate these theoretical results using various real-world datasets, model classes, and prediction tasks. 

By bringing diverse explanation methods into a common framework, this work (1)~advances the conceptual understanding of these methods, revealing their shared local function approximation objective, properties, and relation to one another, and (2) guides the use of these methods in practice, providing a principled approach to choose among methods and paving the way for the creation of new ones.

% A critical problem in the literature on post hoc explanations is the lack of a common foundational goal among methods. Specifically, different explanation methods have different motivating principles: some are motivated by function approximation, some use game theoretic notions such as Shapley-Aumann values, while many are ad hoc and driven by the goal of obtaining clean visualizations. Such fragmentation of goals prevents a coherent view of post hoc explainability, making systemic progress difficult to achieve and measure.

% In this work, we take the first steps toward bridging these gaps and \textit{unify} eight popular post hoc explanation methods (LIME, C-LIME, SHAP, Vanilla Gradients, Gradients $\times$ Input, Integrated Gradients, SmoothGrad, Occlusion) by showing that these diverse methods can all be viewed as performing local function approximation of the black-box model, with methods differing only by the local neighborhoods and loss functions used to perform the approximation.  This unification enables us to (1) state a \textit{no free lunch theorem for explanation methods} which demonstrates that no single method can perform optimally across all neighbourhoods, and (2) provide a \textit{guiding principle} to choose among explanations based on the ability of the approximation to recover the underlying black-box model.
\end{abstract}

\vspace{-0.2cm}
\section{Introduction} \label{secn:intro}
\vspace{-0.1cm}

As machine learning models become increasingly complex and are increasingly deployed in high-stakes settings (e.g., medicine \cite{yu2018artificial}, law \cite{walters2021artificial}, and finance \cite{cao2022ai}), there is a growing emphasis on understanding how models make predictions so that decision-makers (e.g., doctors, judges, and loan officers) can assess the extent to which they can trust model predictions. To this end, several post hoc explanation methods have been developed, including \LIME~\cite{ribeiro2016lime}, \CLIME \cite{agarwal2021clime}, \SHAP~\cite{lundberg2017shap}, \Occlusion~\cite{zeiler2013occlusion}, \Grads~\cite{simonyan2014grad}, \GradtimesInput~\cite{shrikumar2017learning}, \SmoothGrad~\cite{smilkov2017smoothgrad}, and \IntGrad~\cite{sundararajan2017integratedgrad}. 
However, different methods have different goals. Such differences lead to both conceptual and practical challenges to understanding and using explanation methods, thwarting progress in the field.
% However, different methods have different goals. For example, \LIME is motivated by local function approximation, \SHAP by the game theoretic notion of Shapley values, while \SmoothGrad by obtaining clean visualizations. Such differences lead to both conceptual and practical challenges to understanding and using explanation methods, thwarting progress in the field.  

From a conceptual standpoint, the misalignment of goals among methods leads to an inconsistent view of explanations. What is an explanation? This is unclear as different methods have different notions of explanation. Depending on the method, explanations may be local function approximations (\LIME and \CLIME), Shapley values (\SHAP), raw gradients (\Grads), raw gradients scaled by the input (\GradtimesInput), de-noised gradients (\SmoothGrad), or a straight-line path integral of gradients (\IntGrad). Furthermore, the lack of a common mathematical framework for studying these diverse methods prevents a systematic understanding of these methods and their properties. To address these challenges, this paper unifies diverse explanation methods under a common framework, showing that diverse methods share a common motivation of local function approximation, and uses the framework to investigate and evaluate properties of these methods.

From a practical standpoint, the misalignment of goals among methods leads to the disagreement problem \cite{krishna2022disagreement}, the phenomenon that different methods provide disagreeing explanations for the same model prediction. Not only do different methods often generate disagreeing explanations in practice, but practitioners do not have a principled approach to select among explanations, resorting to ad hoc heuristics such as personal preference \cite{krishna2022disagreement}. These findings prompt one to ask why explanation methods disagree and how to select among them in a principled manner. This paper addresses these questions, providing both an explanation for the disagreement problem and a principled approach to select among methods.

Thus, to address these conceptual and practical challenges, we study post hoc explanation methods from a function approximation perspective. We formalize a mathematical framework that unifies and characterizes diverse methods and that provides a principled approach to select among methods. Our work makes the following contributions:

\begin{enumerate}
    \item We show that eight diverse, popular explanation methods (\LIME, \CLIME, \KSHAP, \Occlusion, \Grads, \GradtimesInput, \SmoothGrad, and \IntGrad) all perform local function approximation of the black-box model, differing only in the neighbourhoods and loss functions used to perform the approximation.
    
    \item We introduce a \textit{no free lunch theorem for explanation methods} which demonstrates that no single explanation method can perform local function approximation faithfully across all neighbourhoods, which in turn calls for a principled approach to select among methods.
    
    \item To select among methods, we set forth a \textit{guiding principle} based on function approximation, deeming a method to be effective if its explanation recovers the black-box model when the two are in the same model class (i.e., if the explanation perfectly approximates the black-box model when possible).
    
    \item We empirically validate the theoretical results above using various real-world datasets, model classes, and prediction tasks.
\end{enumerate}

\section{Related Work}

\xhdr{Post hoc explanation methods.}
Post hoc explanation methods can be classified based on model access (black-box model vs. access to model internals), explanation scope (global vs. local), search technique (perturbation-based vs. gradient-based), and basic unit of explanation (feature importance vs. rule-based). This paper focuses on local post hoc explanation methods based on feature importance. It analyzes four perturbation-based methods (\LIME, \CLIME, \KSHAP, and \Occlusion) and four gradient-based methods (\Grads, \GradtimesInput, \SmoothGrad, and \IntGrad).

\xhdr{Connections among post hoc explanation methods.}
Prior works have taken initial steps towards characterizing post hoc explanation methods and the connections among them. \citet{agarwal2021clime} proved that \CLIME and \SmoothGrad converge to the same explanation in expectation. \citet{lundberg2017shap} proposed a framework based on Shapley values to unify binary perturbation-based explanations. \citet{covert2021removal} found that many perturbation-based methods share the property of estimating feature importance based on the change in model behavior upon feature removal. In addition, \citet{ancona2018gradient} analyzed four gradient-based explanation methods and the conditions under which they produce similar explanations. However, these analyses are based on mechanistic properties of methods (e.g., Shapley values or feature removal), are limited in scope (connecting only two methods, only perturbation-based methods, or only gradient-based methods), and do not inform when one method is preferable to another. In contrast, this paper formalizes a mathematical framework based on the concept of local function approximation, unifies eight diverse methods (spanning perturbation-based and gradient-based methods), and guides the use of these methods in practice.

\xhdr{Properties of post hoc explanation methods.}
Prior works have examined various properties of post hoc explanation methods, including faithfulness to the black-box model \cite{adebayo2018sanity, srinivas2018gradmatch, hooker2019benchmark}, robustness to adversarial attack \cite{ghorbani2019interpretation, slack2019can, dombrowski2019explanations, adebayo2018sanity, alvarez2018robustness}, and fairness across subgroups \cite{dai2022fairness}. This paper focuses on explanation faithfulness. Related works \cite{adebayo2018sanity, srinivas2018gradmatch, hooker2019benchmark} assessed explanations generated by gradient-based methods, finding that they are not always faithful to the underlying model. Different from these works, this paper provides a framework for generating faithful explanations in the first place, theoretically characterizes the faithfulness of existing methods in different input domains, and provides a principled approach to select among methods and develop new ones based on explanation faithfulness.

\section{Explanation as Local Function Approximation}\label{sec:LFA}

In this section, we formalize the local function approximation framework and show its connection to existing explanation methods. We start by defining the notation used in the paper.

\xhdr{Notation.} Let $f: \mathcal{X} \rightarrow \mathcal{Y}$ be the black-box function we seek to explain in a post hoc manner, with input domain $\mathcal{X}$ (e.g., $\mathcal{X} = \R^d$ or $\{0,1\}^d$) and output domain $\mathcal{Y}$ (e.g., $\mathcal{Y} = \R$ or $ [0,1]$). Let $\mathcal{G} = \{g: \mathcal{X} \rightarrow \mathcal{Y} \}$ be the class of interpretable models used to generate a local explanation for $f$ by selecting a suitable interpretable model $g \in \mathcal{G}$. 

We characterize locality around a point $\X_0 \in \mathcal{X}$ using a noise random variable $\xi$ which is sampled from distribution $\mathcal{Z}$. Let $\X_\xi = \X_0 \op \xi$ be a perturbation of $\X_0$ generated by combining $\X_0$ and $\xi$ using a binary operator $\op$ (e.g., addition, multiplication). Lastly, let $\ell(f,g,\X_0, \xi) \in \R^+$ be the loss function (e.g., squared error, cross-entropy) measuring the distance between $f$ and $g$ over the noise random variable $\xi$ around $\X_0$. 

We now define the local function approximation framework.

\begin{defn} \label{defn:lfa}
\textbf{Local function approximation} (LFA) of a black-box model $f$ on a neighbourhood distribution $\mathcal{Z}$  around $\X_0$ by an interpretable model class $\mathcal{G}$ and a loss function $\ell$ is given by 

\vspace{-0.4cm}
\begin{align}
\label{eqn:LFA}
g^{*} = \argmin_{g \in \mathcal{G}} \E_{\xi \sim \mathcal{Z}} \ell(f, g, \X_0, \xi)
\end{align}

where a valid loss $\ell$ is such that $\E_{\xi \sim \mathcal{Z}} \ell(f,g,\X_0,\xi) = 0 \iff f(\X_\xi) = g(\X_\xi)~~~\forall \xi \sim \mathcal{Z}$     

\end{defn}

The LFA framework is a formalization of the function approximation perspective first introduced by \LIME \cite{ribeiro2016lime} to motivate local explanations. Note that this conceptual framework is distinct from the algorithm introduced by \LIME. We elaborate on this distinction below.

(1) The LFA framework requires that $f$ and $g$ share the same input domain $\mathcal{X}$ and output domain $\mathcal{Y}$, a fundamental prerequisite for function approximation. This implies, for example, that using an interpretable model g with binary inputs ($\mathcal{X} = \{0,1\}^d$) to approximate a black-box model $f$ with continuous inputs ($\mathcal{X} = \R^d$), as proposed in \LIME, is not true function approximation. 

(2) By imposing a condition on the loss function, the LFA framework ensures model recovery under specific conditions: $g^*$ recovers $f$ (i.e., $g^* = f$) through LFA when $f$ itself is of the interpretable model class $\mathcal{G}$ (i.e., $f \in \mathcal{G}$) and perturbations span the input domain of $f$ (i.e., $\text{domain}(\X) = \mathcal{X}$). This is a key distinction between the LFA framework and \LIME (which has no such requirement) and guides the characterization of explanation methods in Section \S \ref{sec:whichmethod}.

(3) Efficiently minimizing Equation \ref{eqn:LFA} requires following standard machine learning methodology of splitting the perturbation data into train / validation / test sets and tuning hyper-parameters on the validation set to ensure generalization. To our knowledge, implementations of \LIME do not adopt this procedure, making it possible to overfit to a small number of perturbations.

The LFA framework is generic enough to accommodate a variety of explanation methods. In fact, we show that specific instances of this framework converge to existing methods, as summarized in Table~\ref{table:meta-algo-instances}. At a high level, existing methods use a linear model $g$ to locally approximate the black-box model $f$ in different input domains (binary or continuous) over different local neighbourhoods specified by noise random variable $\xi$ (where $\xi$ is binary or continuous, drawn from a specified distribution, and combined additively or multiplicatively with point $\X_0$) using different loss functions (squared-error or gradient-matching loss). We discuss the details of these connections in the following sections.

\begin{table}[]
    \centering
    \begin{tabular}{c|c|c}
        \textbf{Explanation Method} & \textbf{Local Neighbourhood $\mathcal{Z}$ around $\X_0$} & \textbf{Loss Function $\ell$}\\
        \midrule
         C-LIME & $\X_0 + \xi; ~\xi (\in \mathbb{R}^d) \sim \text{Normal}(0, \sigma^2)$ & Squared Error \\
         SmoothGrad & $\X_0 + \xi;~ \xi (\in \mathbb{R}^d) \sim \text{Normal}(0, \sigma^2)$ & Gradient Matching  \\
         Vanilla Gradients & $\X_0 + \xi;~ \xi (\in \mathbb{R}^d) \sim \text{Normal}(0, \sigma^2), \sigma \rightarrow 0$ & Gradient Matching\\
         \midrule
         Integrated Gradients & $\xi \X_0; ~\xi (\in \mathbb{R}) \sim \text{Uniform}(0,1)$ & Gradient Matching \\
         Gradients $\times$ Input & $\xi \X_0; ~\xi (\in \mathbb{R}) \sim \text{Uniform}(a,1), a \rightarrow 1 $ & Gradient Matching\\
         \midrule 
         LIME & $\X_0 \odot \xi; ~\xi (\in \{0, 1\}^d) \sim \text{Exponential kernel}$ & Squared Error\\
         KernelSHAP & $\X_0 \odot \xi; ~\xi (\in \{0, 1\}^d) \sim \text{Shapley kernel}$ & Squared Error \\
         Occlusion & $\X_0 \odot \xi; ~\xi (\in \{0, 1\}^d) \sim \text{Random one-hot vectors}$ & Squared Error\\
    \end{tabular}
    \caption{Correspondence of existing explanation methods to instances of the LFA framework. Existing methods perform LFA of a black-box model $f$ using the interpretable model class $\mathcal{G}$ of linear models where $g(\X) = \mathbf{w}^\top\X$ over a local neighbourhood $\mathcal{Z}$ around point $\X_0$ based on a loss function~$\ell$. Exponential and Shapley kernels are defined in Appendix~\ref{app:proofs-all}.}
    \label{table:meta-algo-instances}
    \vspace{-6mm}
\end{table}

% LFA with Continuous Noise: SmoothGrad, Integrated Gradients, Vanilla Gradients, Gradients x Input, and C-LIME
%
\subsection{LFA with Continuous Noise: Gradient-Based Explanation Methods}\label{sec:gradients}

To connect gradient-based explanation methods to the LFA framework, we leverage the gradient-matching loss function $\ell_{gm}$. We define $\ell_{gm}$ and show that it is a valid loss function for LFA.

\vspace{-0.4cm}
\begin{align} 
\ell_{gm}(f, g, \X_0, \xi) = \| \nabla_{\xi} f(\X_0 \oplus \xi) - \nabla_{\xi} g(\X_0 \oplus \xi) \|_2^2
\end{align}

This loss function has been previously used in the contexts of generative modeling (where it is dubbed score-matching) \cite{JMLR:v6:hyvarinen05a} and model distillation \cite{srinivas2018gradmatch}. However, to our knowledge, its use in interpretability is novel.

\begin{prop} \label{prop:gm-loss}
The gradient-matching loss function $\ell_{gm}$ is a valid loss function for LFA up to
a constant, i.e., $\E_{\xi \sim \mathcal{Z}} \ell_{gm}(f,g,\X_0,\xi) = 0 \iff f(\X_\xi) = g(\X_\xi) + C~~~\forall \xi \sim \mathcal{Z}$, where $C \in \R$.    
\end{prop}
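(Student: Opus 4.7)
The plan is to prove the two implications separately, exploiting the structure of $\ell_{gm}$ as a squared $L^2$ norm.

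For the easy direction ($\Leftarrow$), I would assume $f(\X_\xi) = g(\X_\xi) + C$ for all $\xi$ in the support of $\mathcal{Z}$ and differentiate both sides with respect to $\xi$. Since $C$ is a constant, $\nabla_\xi C = 0$, yielding $\nabla_\xi f(\X_0 \oplus \xi) = \nabla_\xi g(\X_0 \oplus \xi)$ pointwise on the support. Plugging into the definition of $\ell_{gm}$ makes the integrand identically zero, so its expectation is trivially zero.

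For the harder direction ($\Rightarrow$), I would start from the observation that $\ell_{gm} \geq 0$ as a squared norm. A nonnegative measurable function with zero expectation under $\mathcal{Z}$ must vanish $\mathcal{Z}$-almost surely, hence $\nabla_\xi f(\X_0 \oplus \xi) = \nabla_\xi g(\X_0 \oplus \xi)$ for $\mathcal{Z}$-a.e.\ $\xi$. The remaining task is to lift this a.e.\ equality of gradients to an equality of functions $f(\X_\xi) = g(\X_\xi) + C$ on the full support.

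The main obstacle is precisely this last step, and it requires mild regularity and topological assumptions on $\mathcal{Z}$ and on the functions. Assuming $f$ and $g$ are continuously differentiable (required in any case for the gradient-matching loss to be defined) and that the support of $\mathcal{Z}$ is a path-connected set of positive Lebesgue measure --- which holds for the continuous-noise instantiations in Table~\ref{table:meta-algo-instances} (Gaussian with full support on $\R^d$; uniform scalings along the ray through $\X_0$) --- I would apply the fundamental theorem of calculus along smooth paths within the support. For any two points $\X_{\xi_1}, \X_{\xi_2}$ in the support, integrating $\nabla(f-g)$ along a connecting path gives $(f-g)(\X_{\xi_2}) - (f-g)(\X_{\xi_1}) = 0$, so $f - g$ is a constant $C$ on the support. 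Continuity extends this from the a.e.\ set to the closed support, completing the implication. I would note in passing that without connectedness one only obtains a locally constant function, and without differentiability the gradient-matching loss is not well-defined; hence these assumptions are not merely technical conveniences but intrinsic to the setting.
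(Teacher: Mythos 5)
Your proof is correct and follows essentially the same route as the paper's: differentiate $f = g + C$ to get vanishing loss in one direction, and integrate the equality of gradients in the other. The paper's version is a two-line sketch that silently elides the points you handle explicitly (passing from zero expectation to $\mathcal{Z}$-a.e.\ vanishing, and needing a connected support to conclude that $f-g$ is a single constant rather than merely locally constant), so your added care is a strict improvement in rigor rather than a different argument.
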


\begin{proof} If $f(\X_\xi) = g(\X_\xi)$, then $\grad f(\X_\xi) = \grad g(\X_\xi)$ and it follows from the definition of $\ell_{gm}$ that $\ell_{gm}=0$. Integrating $\grad f(\X_\xi) = \grad g(\X_\xi)$ gives $f(\X_\xi) = g(\X_\xi) + C$.
% Considering the case $\ell_{gm}=0$, we have that $\grad f(\X) = \grad g(\X)$ for $\xi \sim \mathcal{N}$.
\end{proof}

Proposition~\ref{prop:gm-loss} implies that, when using the linear model class $\mathcal{G}$ parameterized by $g(\X) = \mathbf{w}^\top\X + b$ to approximate $f$, $g^*$ recovers $\mathbf{w}$ but not $b$. This can be fixed by setting $b = f(0)$.

\begin{thm}
LFA with gradient-matching loss is equivalent to (1) \SmoothGrad for additive continuous Gaussian noise, which
converges to \Grads in the limit of a small standard deviation for the Gaussian distribution; and
(2) \IntGrad for multiplicative continuous Uniform noise, which converges to \GradtimesInput
in the limit of a small support for the Uniform distribution.
\end{thm}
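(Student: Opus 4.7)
My plan is to substitute the linear hypothesis $g(\X) = \mathbf{w}^\top \X$ into the LFA objective in Equation~\eqref{eqn:LFA} with the gradient-matching loss $\ell_{gm}$, and then solve the resulting convex problem in $\mathbf{w}$ separately for each of the four noise specifications listed in Table~\ref{table:meta-algo-instances}. In every case the first-order condition should collapse to a closed-form expression in $\nabla f$ averaged over the chosen noise distribution, and that expression should match the published formula for the corresponding method. Proposition~\ref{prop:gm-loss} already tells us that the bias $b$ is irrelevant to $\ell_{gm}$ (and can be recovered afterwards by setting $b = f(\mathbf{0})$), so I only need to track $\mathbf{w}$.

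For part~(1), with $\X_\xi = \X_0 + \xi$ and $\xi \sim \mathrm{Normal}(0, \sigma^2 I)$, the chain rule gives $\nabla_\xi f(\X_\xi) = \nabla f(\X_\xi)$ and $\nabla_\xi g(\X_\xi) = \mathbf{w}$, so the expected loss equals $\E_{\xi}\|\nabla f(\X_0 + \xi) - \mathbf{w}\|_2^2$. This is strictly convex in $\mathbf{w}$, and the unique minimizer is $\mathbf{w}^\ast = \E_{\xi}\,\nabla f(\X_0 + \xi)$, exactly the \SmoothGrad attribution. Sending $\sigma \to 0$ and invoking dominated convergence (assuming continuity and local boundedness of $\nabla f$ near $\X_0$) gives $\mathbf{w}^\ast \to \nabla f(\X_0)$, which is \Grads.

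For part~(2), with $\X_\xi = \xi \X_0$ and scalar $\xi \sim \mathrm{Uniform}(0,1)$, the chain rule yields $\nabla_\xi f(\xi \X_0) = \X_0^\top \nabla f(\xi \X_0)$ and $\nabla_\xi g(\xi \X_0) = \X_0^\top \mathbf{w}$, so the expected loss reduces to $\E_{\xi}(\X_0^\top \nabla f(\xi \X_0) - \X_0^\top \mathbf{w})^2$. The stationarity condition is $\X_0 \X_0^\top \bigl(\mathbf{w} - \E_{\xi}\nabla f(\xi \X_0)\bigr) = \mathbf{0}$, which is solved by the canonical choice $\mathbf{w}^\ast = \int_0^1 \nabla f(\xi \X_0)\,d\xi$, the per-feature integrated gradient; taking the elementwise product with $\X_0$ then recovers the standard \IntGrad attribution with baseline $\mathbf{0}$. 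Restricting to $\xi \sim \mathrm{Uniform}(a,1)$ and letting $a \to 1$ concentrates the integral at $\xi = 1$, so $\mathbf{w}^\ast \to \nabla f(\X_0)$, and multiplying elementwise by $\X_0$ gives the \GradtimesInput attribution.

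The main obstacle is the multiplicative case: because $\xi$ is scalar, the Hessian of the quadratic is proportional to the rank-one matrix $\X_0 \X_0^\top$, so the minimizer is unique only along the direction of $\X_0$; the orthogonal complement is unconstrained. I would resolve this by selecting the canonical stationary point $\mathbf{w}^\ast = \E_{\xi}\nabla f(\xi \X_0)$, which satisfies the optimality equation, arises as the natural limit of a coordinate-wise or regularized variant of the problem, and is the convention implicit in the standard \IntGrad formulation. The other steps---minimizing a convex quadratic and exchanging a limit with an expectation---are routine and will be dispatched quickly.
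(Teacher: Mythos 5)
Your part (1) is correct and matches the paper's argument: for additive noise the gradient-matching objective becomes $\E_{\xi}\|\nabla_{\X_0} f(\X_0+\xi)-\mathbf{w}\|_2^2$, whose unique stationary point is the \SmoothGrad average, and the $\sigma\to 0$ limit gives \Grads (the paper argues this via a nascent Dirac delta rather than dominated convergence, but that is cosmetic).

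Part (2) has a genuine gap, and it stems from how you parameterize the surrogate inside the loss. The paper's gradient-matching loss for the multiplicative case is $\|\grad f(\X_0\odot\xi)-\grad g(\xi)\|_2^2$ with $g(\xi)=\mathbf{w}^\top\xi$ and $\xi$ treated as a $d$-dimensional perturbation, so that $\grad g(\xi)=\mathbf{w}$ and, by the chain rule, $\grad f(\X_0\odot\xi)=\nabla_{\X_0}f(\X_\xi)\odot\X_0$. The objective is then $\E_\xi\|\nabla_{\X_0}f(\X_\xi)\odot\X_0-\mathbf{w}\|_2^2$, which is strictly convex with the \emph{unique} minimizer $\mathbf{w}^*=\X_0\odot\E_\xi[\nabla_{\X_0}f(\X_\xi)]$ --- this is already the \IntGrad attribution, input multiplication included. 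You instead match $\grad g(\X_\xi)$ with scalar $\xi$, which collapses the problem to the rank-one quadratic $\E_\xi(\X_0^\top\nabla f(\xi\X_0)-\X_0^\top\mathbf{w})^2$. Two things then go wrong: (i) the minimizer is only determined along $\mathrm{span}(\X_0)$, and your ``canonical stationary point'' is an arbitrary selection not produced by the LFA optimization, so you have not shown that LFA \emph{returns} \IntGrad; and (ii) even your selected $\mathbf{w}^*=\E_\xi\nabla f(\xi\X_0)$ is the path-integrated gradient \emph{without} the input factor, so you need a post hoc elementwise multiplication by $\X_0$ that lives outside the framework. The theorem asserts that the weights of $g^*$ \emph{are} the \IntGrad (resp. \GradtimesInput) attributions, so this extra step means you have proved equivalence to a different method. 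Indeed, Section 4.2 of the paper makes exactly this point: replacing $g(\xi)$ by $g(\X_\xi)$ in the loss is the modification that strips the input-multiplication term from \IntGrad. To repair your argument, adopt the paper's parameterization $\grad g(\xi)=\mathbf{w}$; the rank-deficiency and the dangling $\odot\,\X_0$ both disappear, and the $a\to 1$ limit then yields $\mathbf{w}^*\to\X_0\odot\nabla_{\X_0}f(\X_0)$ directly.
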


\textit{Proof Sketch.} For \SmoothGrad and \IntGrad, the idea is that these methods are exactly the first-order stationary points of the gradient-matching loss function under their respective noise distributions. In other words, the weights of the interpretable model $g$ that minimize the loss function is the explanation returned by each method. For \Grads and \GradtimesInput, the result is derived by taking the specified limits and using the Dirac delta function to calculate the limit. In the limit, the weights of the interpretable model $g$ converge to the explanation of each method. The full proof is in Appendix~\ref{app:proofs-all}.

Along with gradient-based methods, \CLIME (a perturbation-based method) is an instance of the LFA framework by definition, using the squared-error loss function. The analysis in this section characterizes methods that use continuous noise. It does not extend to binary or discrete noise methods because gradients and continuous random variables do not apply in these domains. In the next section, we discuss binary noise methods.

\subsection{LFA with Binary Noise: LIME, KernelSHAP and Occlusion maps}

\begin{thm}
LFA with multiplicative binary noise and squared-error loss is equivalent to (1) LIME for noise sampled from an unnormalized exponential kernel over binary vectors; (2) KernelSHAP for noise sampled from an unnormalized Shapley kernel; and (3) Occlusion for noise in the form of one-hot vectors.
\end{thm}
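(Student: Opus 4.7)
The plan is to write out the LFA objective under multiplicative binary noise ($\X_\xi = \X_0 \odot \xi$), squared-error loss, and linear class $g(\X) = \mathbf{w}^\top \X$, and then specialize the distribution $\mathcal{Z}$ for each of the three methods. Substituting these choices into Definition~\ref{defn:lfa} and writing the expectation explicitly as a sum over the $2^d$ binary masks yields
\begin{align*}
\mathbf{w}^{*} = \argmin_{\mathbf{w}} \sum_{\xi \in \{0,1\}^d} p(\xi) \bigl( f(\X_0 \odot \xi) - \mathbf{w}^\top(\X_0 \odot \xi) \bigr)^2,
\end{align*}
which is a generic weighted least squares problem with weights $p(\xi)$ induced by $\mathcal{Z}$. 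Since the argmin of a non-negatively weighted sum of non-negative terms is invariant under positive rescaling of the weights, $p$ may equivalently be taken to be any unnormalized kernel, so the three parts of the theorem reduce to matching $p$ against the weighting scheme of each method.

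For (1) \LIME, the published objective is exactly this weighted least squares with $p(\xi)$ given by the exponential kernel $\pi_{\X_0}(\xi) = \exp(-D(\X_0, \X_0 \odot \xi)^2/\sigma^2)$ (defined in Appendix~\ref{app:proofs-all}), so the equivalence is immediate. For (2) \KSHAP, the same argument applies with $p$ replaced by the Shapley kernel $\pi_{\mathrm{SHAP}}(\xi) \propto (d-1)/\bigl[\binom{d}{|\xi|}\,|\xi|\,(d-|\xi|)\bigr]$, matching the weighting in the original \KSHAP derivation. For (3) \Occlusion, sampling $\xi$ uniformly from $\{e_1,\dots,e_d\}$ zeros out all but one coordinate of $\X_0 \odot \xi$, so the quadratic in $\mathbf{w}$ decouples coordinate-wise; each summand involves a single $w_i$, and its closed-form minimizer gives the attribution $w_i^{*} \, x_{0,i} = f(x_{0,i} e_i)$, i.e.\ the model's response when only feature $i$ is retained, which is the \Occlusion score relative to a zero baseline.

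The hardest point to handle rigorously is the degeneracy of the Shapley kernel at $|\xi| \in \{0, d\}$, where the unnormalized weight diverges and no honest probability distribution exists. Standard \KSHAP sidesteps this by imposing $g(\mathbf{0}) = f(\mathbf{0})$ and $g(\X_0) = f(\X_0)$ as hard equality constraints. To embed this into the LFA program I would either (a) take a limit of finite-weight distributions that place increasing mass on these two boundary masks and pass the limit through the argmin, appealing to continuity of the weighted-least-squares solution in $p$, or (b) restrict $\mathcal{G}$ to the subclass of linear models satisfying the two boundary equalities. Either route recovers the \KSHAP estimator, and an analogous but milder remark applies to \LIME when $\sigma$ is small and the exponential kernel concentrates near $\xi = \mathbf{1}$.
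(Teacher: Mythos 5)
Your overall strategy for parts (1) and (2) matches the paper's: write the LFA objective as a weighted least squares over the $2^d$ masks and absorb the unnormalized kernel into the weights (the paper phrases the same step as importance sampling against a Bernoulli$(0.5)$ proposal). Your treatment of the Shapley-kernel degeneracy at $|\xi|\in\{0,d\}$ is actually more careful than the paper, which does not address it. One parametrization issue, though: you regress $f(\X_0\odot\xi)$ onto $\mathbf{w}^\top(\X_0\odot\xi)$, whereas \LIME, \KSHAP, and the paper's corresponding LFA instances regress onto $g(\xi)=\mathbf{w}^\top\xi$, i.e.\ the surrogate acts on the binary mask, not on the perturbed input. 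The two problems are related by $v_i = w_i x_{0,i}$, so your $\mathbf{w}^*$ is the \LIME/\KSHAP attribution divided coordinate-wise by $\X_0$ (and undefined where $x_{0,i}=0$). Since the theorem asserts equivalence to those methods as published, you need either to use $g(\xi)$ directly or to state this correspondence explicitly; the paper in fact leans on the $g(\xi)$ parametrization later (Remark~\ref{rmk:recovery-contX}) to explain why these methods recover $\mathbf{w}_f\odot\X$ rather than $\mathbf{w}_f$.

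The genuine gap is in part (3). With a one-hot mask $\xi=e_i$, the perturbed input $\X_0\odot e_i$ retains \emph{only} feature $i$ and zeroes out all the others, so your decoupled minimizer yields $f(x_{0,i}e_i)$, the prediction with feature $i$ alone present. That is not the \Occlusion score: \Occlusion attributes to feature $i$ the quantity $f(\X_0) - f(\X_0\odot(\mathbf{1}-e_i))$, the change in prediction when feature $i$ alone is \emph{removed}. The two coincide only for additive models. The paper's proof avoids this by changing the loss for the \Occlusion instance to $\ell(f,g,\X_0,\xi) = (\Delta f - g(\xi))^2$ with $\Delta f = f(\X_0) - f(\X_0(1-\xi))$, i.e.\ the regression target is the occlusion difference itself and the complement mask $1-\xi$ appears inside $f$. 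Your argument as written derives a different attribution method (a ``keep-only-$i$'' score), so part (3) does not go through without redefining the target in the loss.
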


\textit{Proof Sketch.} 
For \LIME and \KSHAP, the equivalence is mostly by definition: these methods have components that correspond to the interpretable model $g$ and the loss function $\ell$ of the LFA framework and we need only to determine the local neighbourhood $\mathcal{Z}$. We define the local neighbourhood $\mathcal{Z}$ using each method's weighting kernel. In this setup, the LFA framework yields the respective explanation methods in expectation via importance sampling. For \Occlusion, the equivalence involves enumerating all perturbations, specifying an appropriate loss function, and computing the resulting stationary points of the loss function. The full proof is in Appendix~\ref{app:proofs-all}.

%specifying a loss function that captures the weight of individual features (i.e. the definition of \Occlusion explanations)

\subsection{Which Methods Do Not Perform LFA?}

Some popular explanation methods are not instances of the LFA framework due to their properties. These methods include guided backpropagation \cite{springenberg2015striving}, DeconvNet \cite{noh2015learning}, Grad-CAM \cite{selvaraju2017grad}, Grad-CAM++ \cite{chattopadhay2018grad}, FullGrad \cite{srinivas2019full}, and DeepLIFT \cite{shrikumar2017learning}. Further details are in Appendix~\ref{app:notLFA}.
\section{When Do Explanations Perform Model Recovery?}\label{sec:whichmethod}

Having described the LFA framework and its connections to existing explanation methods, we now leverage this framework to analyze the performance of methods under different conditions. We introduce a \textit{no free lunch theorem for explanation methods}, inspired by classical no free lunch theorems in learning theory and optimization. Then, we assess the ability of existing methods to perform \textit{model recovery} based on which we provide recommendations for choosing among methods.

\subsection{No Free Lunch Theorem for Explanation Methods}\label{sub-sec:NFL}
An important implication of the function approximation perspective is that no explanation can be optimal across all neighbourhoods because each explanation is designed to perform LFA in a specific neighbourhood. This is especially true for explanations of non-linear models. We formalize this intuition into the following theorem.

\begin{thm}[No Free Lunch for Explanation Methods]
\label{thm:NFL}
Consider explaining a black-box model $f$ around point $\X_0$ using an interpretable model $g$ from model class $\mathcal{G}$ and a valid loss function $\ell$ where the distance between $f$ and $\mathcal{G}$ is given by $d(f, \mathcal{G}) = \min_{g \in \mathcal{G}} \max_{\X \in \mathcal{X}} \ell(f,g,0,\X)$.

Then, for any explanation $g^*$ over a neighbourhood distribution $\xi_1~\sim~ \mathcal{Z}_1$ such that $\max_{\xi_1} \ell(f,g^*,\X_0, \xi_1) \leq \epsilon$, there always exists another neighbourhood $\xi_2 \sim \mathcal{Z}_2$ such that $\max_{\xi_2} \ell(f,g^*,\X_0, \xi_2) \geq d(f, \mathcal{G})$.
\end{thm}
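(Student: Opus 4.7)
The plan is to exploit the minimax definition of $d(f, \mathcal{G})$ together with the fact that $g^*$ is itself a member of $\mathcal{G}$. By the definition of $d(f, \mathcal{G})$ as a minimum over $\mathcal{G}$, every $g \in \mathcal{G}$ (including $g^*$) satisfies
\[
\max_{\X \in \mathcal{X}} \ell(f, g^*, 0, \X) \;\geq\; d(f, \mathcal{G}).
\]
This is the engine of the argument: however carefully $g^*$ was chosen to be small on the specific neighborhood $\mathcal{Z}_1$, it cannot beat the best possible global worst-case approximation from its own class.

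First I would extract a witness. By the displayed inequality, there exists (or a supremizing sequence approaches) some $\X^* \in \mathcal{X}$ with $\ell(f, g^*, 0, \X^*) \geq d(f, \mathcal{G})$. Next I would transport this witness into a perturbation around $\X_0$ by choosing $\xi_2^*$ such that $\X_0 \oplus \xi_2^* = \X^*$. This step uses that the perturbation structure $\oplus$ is rich enough to reach every point of $\mathcal{X}$ from $\X_0$, which is true for the additive Gaussian and multiplicative binary schemes that organize the paper (Table~\ref{table:meta-algo-instances}). Because the valid losses under consideration depend on $f$ and $g$ only through their behaviour at the perturbed point $\X_\xi = \X_0 \oplus \xi$, I would invoke the translation identity
\[
\ell(f, g^*, \X_0, \xi_2^*) \;=\; \ell(f, g^*, 0, \X^*).
\]

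To finish, take $\mathcal{Z}_2$ to be the Dirac mass concentrated at $\xi_2^*$, which is a perfectly admissible neighborhood distribution in Definition~\ref{defn:lfa}. Then
\[
\max_{\xi_2} \ell(f, g^*, \X_0, \xi_2) \;=\; \ell(f, g^*, \X_0, \xi_2^*) \;=\; \ell(f, g^*, 0, \X^*) \;\geq\; d(f, \mathcal{G}),
\]
which is exactly the claimed bound.

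The only non-bookkeeping step --- and hence the main obstacle --- is the translation identity used above, together with the assumption that the perturbation operator can realize $\X^*$ from $\X_0$. For the concrete losses in the paper (squared error and gradient matching) both sides depend only on the pair $(f(\X_\xi), g(\X_\xi))$ or $(\nabla f(\X_\xi), \nabla g(\X_\xi))$ evaluated at a single point $\X_\xi = \X_0 \oplus \xi$, so once we have set $\X_\xi = \X^*$ the identity is immediate. I would therefore record this invariance explicitly as a mild structural property of valid $\ell$, consistent with the role $\X_\xi$ plays in Definition~\ref{defn:lfa}, and note that without it (e.g., a loss that explicitly penalizes the distance of $\xi$ from $0$) the statement would need an additional assumption.
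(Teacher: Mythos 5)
Your proof is correct and follows essentially the same route as the paper's: both arguments pick an adversarial input $\X_{adv} = \arg\max_{\X \in \mathcal{X}} \ell(f,g^*,0,\X)$, transport it into a perturbation $\xi_2$ with $\X_0 \oplus \xi_2 = \X_{adv}$, place it in the support of $\mathcal{Z}_2$ (the paper uses a Uniform distribution over the segment from $\X_0$ to $\X_{adv}$ where you use a Dirac mass --- an immaterial difference), and conclude via $\max_{\X}\ell(f,g^*,0,\X) \geq \min_{g}\max_{\X}\ell(f,g,0,\X) = d(f,\mathcal{G})$. If anything, you are more careful than the paper in flagging the translation identity $\ell(f,g^*,\X_0,\xi_2) = \ell(f,g^*,0,\X_{adv})$ and the reachability of $\X_{adv}$ under $\oplus$ as implicit structural assumptions.
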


\textit{Proof Sketch.} The idea is that, given an explanation obtained by using $g$ to approximate $f$ over a specific local neighbourhood $\mathcal{Z}$, it is always possible to find a local neighbourhood over which this explanation does not perform well (i.e., does not perform faithful LFA). Thus, no single explanation method can perform well over all local neighbourhoods. The proof entails constructing an ``adversarial'' input for an explanation $g^*$ such that $g^*$ has a large loss for this input and then creating a 
neighbourhood that contains this adversarial input which will provably have a large loss. The magnitude of this loss is $d(f, \mathcal{G})$, the distance between $f$ and the model class $\mathcal{G}$, inspired by the Haussdorf distance. The proof is generic and makes no assumptions regarding the forms of $\ell$, $\mathcal{G}$ or $\mathcal{Z}_1$. The full proof is in Appendix~\ref{app:nfl-theorem}.

Thus, an explanation on a finite $\mathcal{Z}_1$ necessarily cannot approximate function behaviour at all other points, especially when $\mathcal{G}$ is less expressive than $f$, which is indicated by a large value of $d(f, \mathcal{G})$. Thus, in the general case, one cannot perform model recovery as $\mathcal{G}$ is less expressive than $f$.

An important implication of Theorem~\ref{thm:NFL} is that seeking to find the ``best'' explanation without specifying a corresponding neighbourhood is futile as no universal ``best'' explanation exists. Furthermore, once the neighbourhood is specified, the best explanation is exactly the one given by the corresponding instance of the LFA framework. 

In the next section, we consider the special case when $d(f, \mathcal{G}) = 0$ (i.e., when $f \in \mathcal{G}$), where Theorem~\ref{thm:NFL} does not apply because the same explanation can be optimal for multiple neighbourhoods and model recovery is thus possible.

\subsection{Characterizing Explanation Methods via Model Recovery}\label{sub-secn:choose-N}

Next, we formally state the model recovery condition for explanation methods. Then, we use this condition as a guiding principle to choose among methods.

\begin{defn}[Model Recovery: Guiding Principle]
Given an instance of the LFA framework with a black-box model $f$ such that $f \in \mathcal{G}$ and a specific noise type (e.g., Gaussian, Uniform), an explanation method performs model recovery if there exists some noise distribution $\mathcal{Z}$ such that LFA returns $g^* = f$.
\end{defn}

In other words, when the black-box model $f$
itself is of the interpretable model class $G$, there must exist some setting of the noise distribution (within the noise type specified in the instance of the LFA framework) that is able to recover the black-box model. Thus, in this special case, we require \textit{local function approximation} to lead to \textit{global model recovery} over all inputs. This criterion can be thought of as a ``sanity check'' for explanation methods to ensure that they remain faithful to the black-box model.

Next, we analyze the impact of the choice of perturbation neighbourhood $\mathcal{Z}$, the binary operator $\oplus$, and the interpretable model class $\mathcal{G}$ on an explanation method's ability to satisfy the model recovery guiding principle in different input domains $\mathcal{X}$. Note that while we can choose $\mathcal{Z}$, $\oplus$, and $\mathcal{G}$, we cannot choose $\mathcal{X}$, the input domain.

\textbf{Which explanation should I choose for continuous $\mathcal{X}$?}
We now analyze the model recovery properties 
of existing explanation methods when the input domain is continuous. We consider methods based on additive continuous noise (\SmoothGrad, \Grads, and \CLIME), multiplicative continuous noise (\IntGrad and \GradtimesInput), and multiplicative binary noise (\LIME, \KSHAP, and \Occlusion). For these methods, we make the following remark regarding model recovery for the 
class of linear models.

\vspace{-0.1cm}
\begin{remark} \label{rmk:recovery-contX}
For $\mathcal{X} = \R^d$ and linear models $f$ and $g$ where $f(\X) = \mathbf{w}_f^\top \X$ and $g(\X) = \mathbf{w}_g^\top \X$, additive continuous noise methods recover f (i.e., $\textbf{w}_g = \textbf{w}_f$) while multiplicative continuous and multiplicative binary noise methods do not and instead recover $\textbf{w}_g = \textbf{w}_f \odot \textbf{x}$.
\end{remark}

This remark can be verified by directly evaluating the explanations (weights) of linear models, where the gradient exactly corresponds to the weights.

Note that the inability of multiplicative continuous noise methods to recover the black-box model is not due to the multiplicative nature of the noise, but due to the parameterization of the loss function. Specifically, these methods (implicitly) use the loss function $\ell(f,g,\X_0,\xi) = \| \grad f(\X_\xi) - \grad g(\xi) \|_2^2$. Slightly changing the loss function to $\ell(f,g,\X_0, \xi) = \| \grad f(\X_\xi) - \grad g(\X_\xi) \|_2^2$, i.e., replacing $g(\xi)$ with $g(\X_\xi)$, would enable $g^*$ to recover $f$. This would change \IntGrad to $\int_{\alpha=0}^1 \nabla_{\alpha \X} f(\alpha \X)$ (omitting the input multiplication term) and \GradtimesInput to \Grads.

A similar argument can be made for binary noise methods which parameterize the loss function as $\ell(f,g,\X_0, \xi) = \| f(\X_\xi) - g(\xi) \|^2$. By changing the loss function to $\ell(f,g,\X_0, \xi) = \| f(\X_\xi) - g(\X_\xi) \|^2$, binary noise methods can recover $f$ for the case described in Remark~\ref{rmk:recovery-contX}. However, binary noise methods for continuous domains are unreliable, as there are cases where, despite the modification to $\ell$, model recovery is not guaranteed. The following is an example of this scenario.

\begin{remark} \label{rmk:sinusoid}
For $\mathcal{X} = \R^d$, periodic functions $f$ and $g$ where $f(\X) =  \sum_{i=1}^{d} \sin(\mathbf{w}_{f_i} \odot \X_{i})$ 
and $g(\X) = \sum_{i=1}^{d} \sin(\mathbf{w}_{g_i} \odot \X_{i})$, and an integer $n$, binary noise methods do not perform model recovery for $|w_{f_i}| \geq \frac{n \pi}{\X_{0_i}}$.
\end{remark}

This is because, for the conditions specified, $\sin(\mathbf{w}_{f_i} \X_{0_i}) = \sin(\pm n \pi) = \sin(0) = 0$, i.e., $\sin(\mathbf{w}_{f_i} \X_{0_i})$ outputs zero for all binary perturbations, thereby preventing model recovery. In this case, the discrete nature of the noise makes model recovery impossible. In general, discrete noise is inadequate for the recovery of models with large frequency components. 

\textbf{Which explanation should I choose for binary $\mathcal{X}$?} In the binary domain, continuous noise methods are invalid, restricting the choice of methods to binary noise methods. For reasons discussed above, methods with perturbation neighbourhoods characterized by multiplicative binary perturbations (e.g., \LIME, \KSHAP, and \Occlusion) only enable $g^*$ to recover $f$ in the binary domain. Note that the sinusoidal example in Remark~\ref{rmk:sinusoid} does not apply in this regime due to the continuous nature of its domain.

\textbf{Which explanation should I choose for discrete $\mathcal{X}$?} In the discrete domain, continuous noise methods are also invalid. In addition, binary noise methods (e.g., \LIME, \KSHAP and \Occlusion) cannot be used either because model recovery is not guaranteed in the sinusoidal case (Remark~\ref{rmk:sinusoid}), following similar logic to that presented for continuous noise. Note that none of the existing methods in Table~\ref{table:meta-algo-instances} perform general discrete perturbations, suggesting that these methods are not suitable for the discrete domain. Thus, in the discrete domain, a user can apply the LFA framework to define a new explanation method, specifying an appropriate discrete noise type. In the next section, we discuss more broadly about how one can use the LFA framework to create novel explanation methods.

\subsection{Designing Novel Explanations with LFA}

The LFA framework not only unifies existing explanation methods but also guides the creation of new ones. To explain a given black-box model prediction using the LFA framework, a user must specify the (1) interpretable model class $\mathcal{G}$, (2) neighbourhood distribution $\mathcal{Z}$, (3) loss function $\ell$, and (4) binary operator $\oplus$ to combine the input and the noise. Specifying these four components completely specifies an instance of the LFA framework, thereby generating an explanation method tailored to a given context. 

To illustrate this, consider a scenario in which a user seeks to create a sparse variant of \SmoothGrad that yields non-zero gradients for only a small number of features (``\texttt{SparseSmoothGrad}''). Designing \texttt{SparseSmoothGrad} only requires the addition of a regularization term to the loss function used in the \SmoothGrad instance of the LFA framework (e.g., $\ell = \ell_{SmoothGrad} + \| \grad g(\X_\xi) \|_0$), 
at which point, sparse solvers may be employed to solve the problem. Note that, unlike \SmoothGrad, \texttt{SparseSmoothGrad} does not have a closed form solution, but that is not an issue for the LFA framework. More generally, by allowing customization of (1), (2), (3), and (4), the LFA framework creates new explanation methods through ``variations on a theme''.

We summarize Section~\S \ref{sec:whichmethod} as a table in Appendix \ref{app:summary-model-recovery} and discuss the practical implications of Section~\S \ref{sec:whichmethod} by providing the following recommendation for choosing among explanation methods.

\textbf{Recommendation for choosing among explanation methods.} 
In general, choose methods that satisfy the guiding principle of model recovery in the input domain in question.
For continuous data, use additive continuous noise methods (e.g., \SmoothGrad, \Grads, \CLIME) or modified multiplicative continuous noise methods (e.g., \IntGrad, \GradtimesInput) as described in Section~\S\ref{sub-secn:choose-N}. 
For binary data, use binary noise methods (e.g., \LIME, \KSHAP, \Occlusion). Given that methods that use discrete noise do not exist, in case of discrete data, design novel explanation methods using the LFA framework with discrete noise neighbourhoods. Within each input domain, choosing among appropriate methods boils down to determining the perturbation neighbourhood most suitable in the given context.

\section{Empirical Evaluation}\label{sec:expts}

In this section, we present an empirical evaluation of the LFA framework. We first describe the experimental setup and then discuss three experiments and their findings. 

\subsection{Datasets, Models, and Metrics}

\textbf{Datasets.}
We experiment with two real-world datasets for two prediction tasks. The first dataset is
the life expectancy dataset from the World Health Organization (WHO) \cite{dataset2018who}. It consists of countries’ demographic, economic, and health factors from 2000 to 2015, with 2,938 observations for 20 continuous features. We
use this dataset to perform regression, predicting life expectancy. The other dataset is the home equity line of credit (HELOC) dataset from FICO \cite{dataset2019heloc}. It consists of information on HELOC applications, with 9,871 observations for 24 continuous features. We use this dataset to perform classification, predicting whether an applicant made payments without being 90 days overdue. Additional dataset details are described in Appendix~\ref{app:exp-info}.

\textbf{Models.}
For each dataset, we train four models: a simple model (linear regression for the WHO dataset and logistic regression for the HELOC dataset) that can satisfy conditions of the guiding principle and three more complex models (neural networks of varying complexity) that are more reflective of real-world applications. Model architectures and performance are described in Appendix~\ref{app:exp-info}.

\textbf{Metrics.}
To measure the similarity between two vectors (e.g., between two sets of explanations or between an explanation and the true model weights), we use L1 distance and cosine distance. L1 distance ranges between [0, $\infty$) and is 0 when two vectors are the same. Cosine distance ranges between [0, 2] and is 0 when the angle between two vectors is $0^\circ$ (or $360^\circ$). For both metrics, the lower the value, the more similar two given vectors are.

% Cosine distance measures the angle between two vectors. It ranges between [0, 2] and is 0 when the angle between two vectors is $0^\circ$ (or $360^\circ$). For both metrics, the lower the value, the more similar two given vectors are.

\subsection{Experiments}

Here, we describe the setup of the experiments, present results, and discuss their implications.

%%%%%EXPERIMENT1

%figure exp1
\begin{figure}
    \centering
    \includegraphics[width=0.90\textwidth]{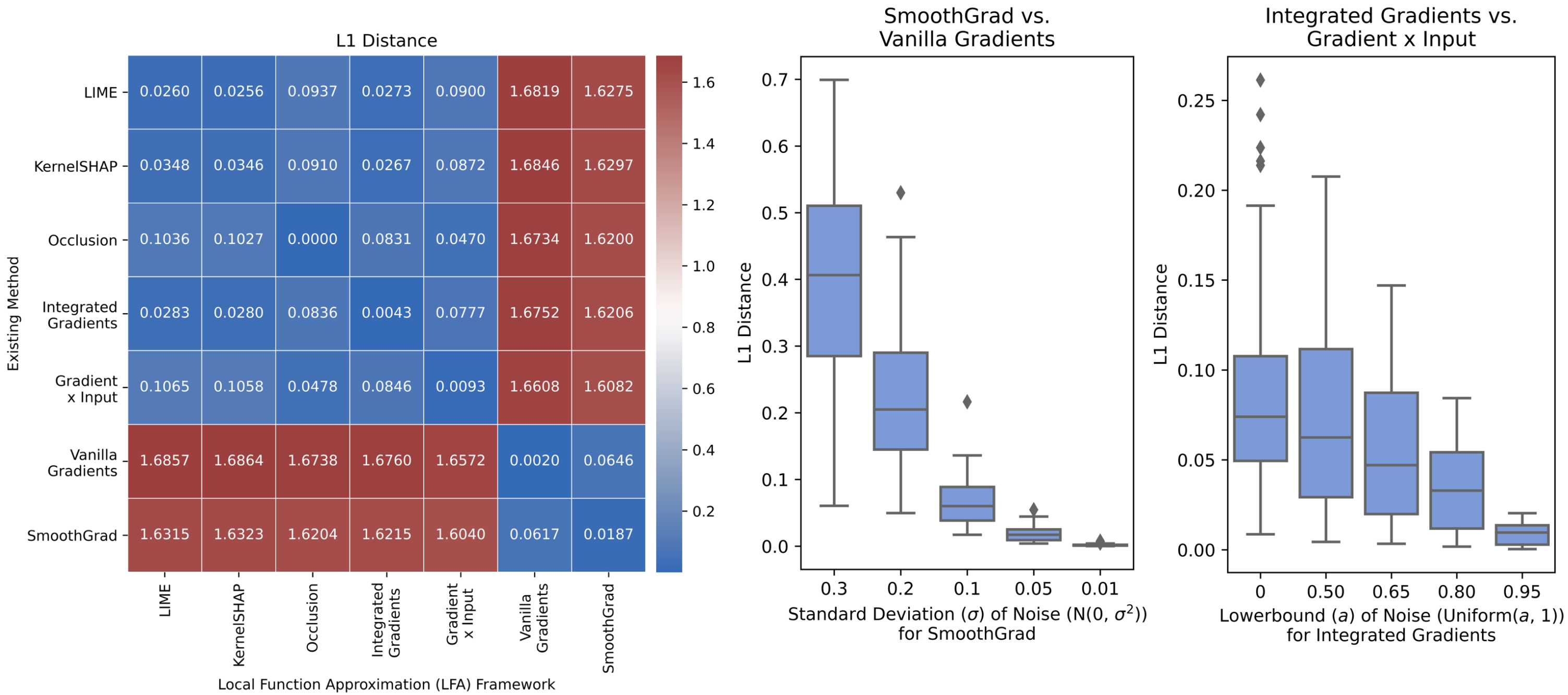}
    \vspace*{-4mm}
    \caption*{\raggedright \hspace{3.5cm} (a) \hspace{4.1cm} (b) \hspace{2.8cm}   (c)}
    \vspace*{-1mm}
    \caption{Correspondence between existing explanation methods and instances of the LFA framework. (a) Heatmap of average L1 distance between pairs of explanations. Boxplots of L1 distance between explanations of (b) \SmoothGrad and \Grads and (c) \IntGrad and \GradtimesInput. The lower the L1 distance, the more similar two explanations are. Results indicate that existing explanation methods are instances of the LFA framework.}
    \label{fig:exp1-lfa}
\end{figure}

\textbf{Experiment 1: Existing explanation methods are instances of the LFA framework.} First, we compare existing methods with corresponding instances of the LFA framework to assess whether they generate the same explanations. To this end, we use seven methods to explain the predictions of black-box models for 100 randomly-selected test set points. For each method, explanations are computed using either the existing method (implemented by Meta's Captum library \cite{kokhlikyan2020captum}) or the corresponding instance of the LFA framework (Table~\ref{table:meta-algo-instances}). The similarity of a given pair of explanations is measured using L1 distance and cosine distance.

The L1 distance values for a neural network with three hidden layers trained on the WHO dataset are shown in Figure~\ref{fig:exp1-lfa}. In Figure~\ref{fig:exp1-lfa}a, lowest L1 distance values appear in the diagonal of the heatmap, indicating that explanations generated by existing methods and corresponding instances of the LFA framework are very similar. Figures~\ref{fig:exp1-lfa}b and \ref{fig:exp1-lfa}c show that explanations generated by instances of the LFA framework corresponding to \SmoothGrad and \IntGrad converge to those of \Grads and \GradtimesInput, respectively. Together, these results demonstrate that, consistent with the theoretical results derived in Section~\S\ref{sec:LFA}, existing methods are instances of the LFA framework. In addition, the clustering of the methods in Figure~\ref{fig:exp1-lfa}a indicates that, consistent with the theoretical analysis in Section~\S\ref{sec:whichmethod}, for continuous data, \SmoothGrad and \Grads generate similar explanations while \LIME, \KSHAP, \Occlusion, \IntGrad, and \GradtimesInput generate similar explanations. We observe similar results across various datasets, models, and metrics (Appendix~\ref{app:exp1-lfa-all}).

%%%%%EXPERIMENT2

%figure
\begin{figure}
    \centering
    \includegraphics[width=0.90\textwidth]{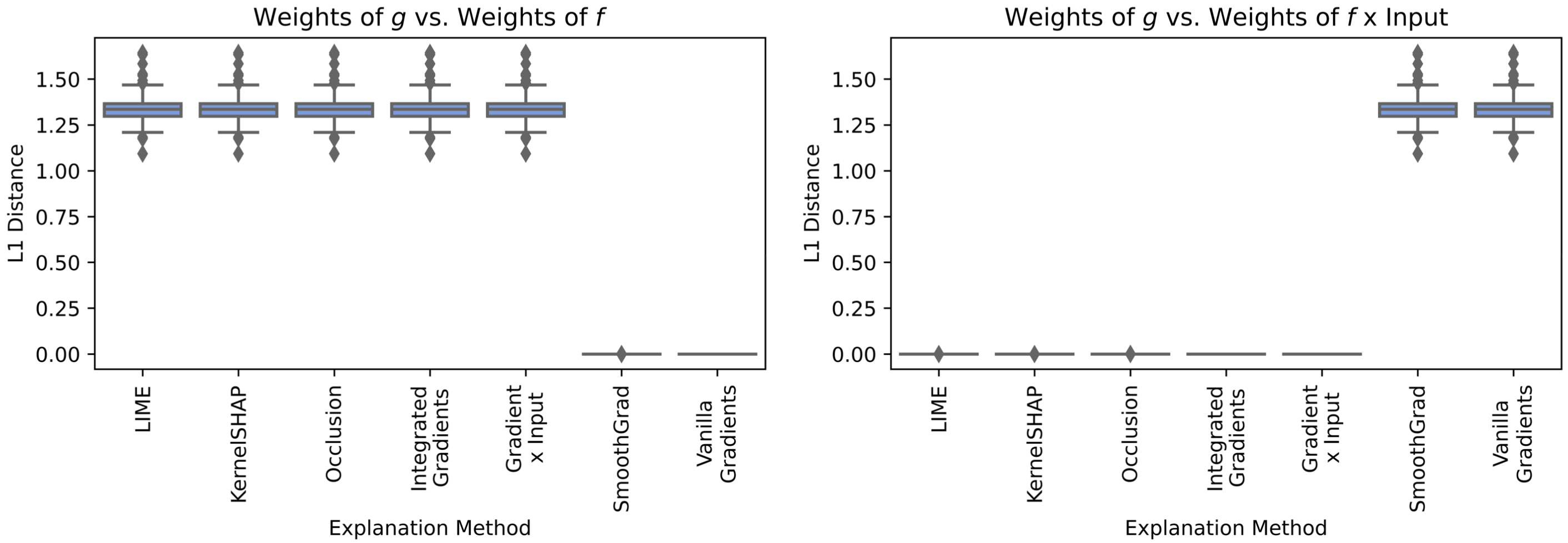}
    \vspace*{-4mm}
    \caption*{\raggedright \hspace{3.9cm} (a) \hspace{5.7cm} (b)}
    %\vspace*{-1mm}
    \caption{Analysis of model recovery. The lower the L1 distance, the more similar $g$'s weights are to (a) $f$'s weights or (b) $f$'s weights multiplied by the input. Results indicate that, for continuous data, additive continuous noise methods recover $f$'s weights, satisfying the guiding principle, while multiplicative binary and multiplicative continuous noise methods do not, recovering $f$'s weights multiplied by the input instead.}
    \label{fig:exp2-recovery}
\end{figure}

\textbf{Experiment 2: Some methods recover the underlying model while others do not (guiding principle).} Next, we empirically assess which existing methods satisfy the guiding principle, i.e., which methods recover the black-box model $f$ when $f$ is of the interpretable model class $\mathcal{G}$. We specify a setting in which $f$ and $g$ are of the same model class, generate explanations using each method, and assess whether $g$ recovers $f$ for each explanation. For the WHO dataset, we set $f$ and $g$ to be linear regression models and generate explanations for 100 randomly-selected test set points. Then, for each point, we compare $g$'s weights with $f$'s gradients alone or with $f$'s gradients multiplied by the input because, based on Section~\S\ref{sec:whichmethod}, some methods generate explanations on the scale of gradients while others on the scale of gradient-times-input. Note that, for linear regression, $f$'s gradients are $f$'s weights.

Results are shown in Figure~\ref{fig:exp2-recovery}. Consistent with Section~\S\ref{sec:whichmethod}, for continuous data, \SmoothGrad and \Grads recover the black-box model, thereby satisfying the guiding principle, while \LIME, \KSHAP, \Occlusion, \IntGrad, and \GradtimesInput do not. We observe similar results for the HELOC dataset using logistic regression models for $f$ and $g$ (Appendix~\ref{app:exp2-recovery-all}).

%%%%%EXPERIMENT3

%figure
\begin{figure}
    \centering
    \includegraphics[width=0.90\textwidth]{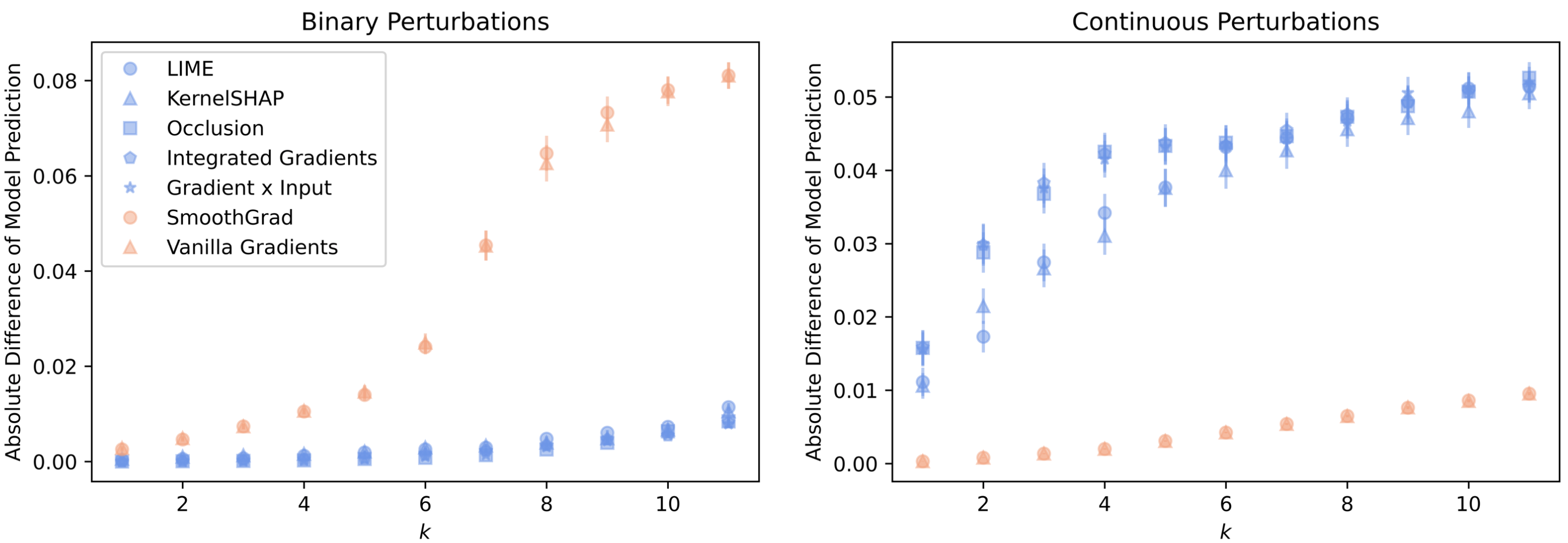}
    \vspace*{-4mm}
    \caption*{\raggedright \hspace{3.8cm} (a) \hspace{5.8cm} (b)}
    \vspace*{-1mm}
    \caption{Perturbation tests perturbing bottom $k$ features using (a) binary or (b) continuous noise. The lower the curve, the better a method identifies unimportant features. Results illustrate the no free lunch theorem, i.e., no single method performs best across all neighborhoods.}
    \label{fig:exp3-ptest}
\end{figure}

\textbf{Experiment 3: No single method performs best across all neighbourhoods (no free lunch theorem).} Lastly, we perform a set of experiments to illustrate the no free lunch theorem in Section~\S\ref{sec:whichmethod}. We generate explanations for black-box model predictions for 100 randomly-selected test set points and evaluate the explanations using perturbation tests based on top-$k$ or bottom-$k$ features. 
%The perturbation tests follow an intuition similar to that of saliency map evaluation in prior works \cite{dabkowski2017real, srinivas2019full, petsiuk2018rise}. 
For perturbation tests based on top-$k$ features, the setup is as follows. For a given data point, $k$, and explanation, we identify the top-$k$ features and either replace them with zero (binary perturbation) or add Gaussian noise to them (continuous perturbation). Then, we calculate the absolute difference in model prediction before and after perturbation. For each point, we generate one binary perturbation (since such perturbations are deterministic) and 100 continuous perturbations (since such perturbations are random), computing the average absolute difference in model prediction for the latter. In this setup, methods that better identify important features yield larger changes in model prediction. For perturbation tests based on bottom-$k$ features, we follow the same procedure but perturb the bottom-$k$ features instead. In this setup, methods that better identify unimportant features yield smaller changes in model prediction. 

Results of perturbation tests based on bottom-$k$ features performed on explanations for a neural network with three hidden layers trained on the WHO dataset are displayed in Figure~\ref{fig:exp3-ptest}. Consistent with the no free lunch theorem in Section~\S\ref{sec:whichmethod}, 
\LIME, \KSHAP, \Occlusion, \IntGrad, and \GradtimesInput perform best on binary perturbation neighbourhoods (Figure~\ref{fig:exp3-ptest}a)
while \SmoothGrad and \Grads perform best on continuous perturbation neighborhoods (Figure~\ref{fig:exp3-ptest}b). We observe consistent results across perturbation test types (top-$k$ and bottom-$k$), datasets, and models (Appendix~\ref{app:exp3-p-test-all}). These results have important implications: one should carefully consider the perturbation neighborhood not only when selecting a method to generate explanations but also when selecting a method to evaluate explanations. In fact, the type of perturbations used to evaluate explanations directly determines explanation method performance.

\vspace{-0.2cm}
\section{Conclusions and Future Work} \label{secn:conclusion}
\vspace{-0.2cm}
In this work, we formalize the \emph{local function approximation (LFA)} framework and demonstrate that eight popular explanation methods can be characterized as instances of this framework with different local neighbourhoods and loss functions. We also introduce the \emph{no free lunch theorem for explanation methods}, showing that no single method can perform optimally across all neighbourhoods, and provide a \emph{guiding principle} for choosing among methods. 

The function approximation perspective captures the essence of an explanation -- a simplification of the real world (i.e., a black-box model) that is nonetheless accurate enough to be useful (i.e., predict outcomes of a set of perturbations). When the real world is ``simple'', an explanation should completely capture its behaviour, a hallmark expressed precisely by the guiding principle. When the requirements of two explanations are distinct (i.e., they are trained to predict different sets of perturbations), then the explanations are each accurate in their own domain and may disagree, a phenomenon captured by the no free lunch theorem.
 
Our work makes fundamental contributions. We \emph{unify} popular explanation methods, bringing diverse methods into a common framework. Unification brings \emph{conceptual coherence and clarity}: diverse explanation methods, even those seemingly unrelated to function approximation, perform LFA but differ in the way they perform it. Unification also enables \emph{theoretical simplicity}: to study diverse explanation methods, instead of analyzing each method individually, one can simply analyze the LFA framework and apply the findings to each method. An example of this is the no free lunch theorem which holds true for all instances of the LFA framework. Furthermore, our work provides \emph{practical guidance} by presenting a principled approach to select among methods and design new ones.

Our work also addresses key open questions in the field. In response to criticism about the lack of consensus in the field regarding the overarching goals of post hoc explainability \cite{lipton2018mythos}, our work points to function approximation as a principled goal. It also provides an explanation for the disagreement problem~\cite{krishna2022disagreement}, i.e., why different methods generate different explanations for the same model prediction. According to the LFA framework, this disagreement occurs because different methods approximate the black-box model over different neighbourhoods using different loss functions. 

Future research includes the following directions. First, we analyzed eight popular post hoc explanation methods and this analysis could be extended to other methods. Second, our work focuses on the faithfulness rather than interpretability of explanations. The latter is encapsulated in the ``interpretable'' model class $\mathcal{G}$, which includes all the information about human preferences with regards to interpretability. However, it is unclear what constitutes an interpretable explanation and elucidating this takes not only conceptual understanding but also human-computer interaction research such as user studies. These are important directions for future research.

%However, it is unclear what precise notions of interpretability can be used to characterize such an interpretable model class.

%acknowledgements
\vspace{-0.215cm}
\section*{Acknowledgements}
\vspace{-0.215cm}
The authors would like to thank the anonymous reviewers for their helpful feedback and the following funding agencies for supporting this work. This work is supported in part by NSF awards $\#$IIS-2008461 and $\#$IIS-2040989, and research awards from Google, JP Morgan, Amazon, Harvard Data Science Initiative, and D\^{}3 Institute at Harvard. H.L. would like to thank Sujatha and Mohan Lakkaraju for their continued support and encouragement. T.H. is supported in part by an NSF GRFP fellowship. The views expressed here are those of the authors and do not reflect the official policy or position of the funding agencies. 

%\section*{References}

\bibliographystyle{unsrtnat}
\bibliography{references.bib}

%%%%%%%%%%%%%%%%%%%%%%%%%%%%%%%%%%%%%%%%%%%%%%%%%%%%%%%%%%%%
\section*{Checklist}

% %%% BEGIN INSTRUCTIONS %%%
% The checklist follows the references.  Please
% read the checklist guidelines carefully for information on how to answer these
% questions.  For each question, change the default \answerTODO{} to \answerYes{},
% \answerNo{}, or \answerNA{}.  You are strongly encouraged to include a {\bf
% justification to your answer}, either by referencing the appropriate section of
% your paper or providing a brief inline description.  For example:
% \begin{itemize}
%   \item Did you include the license to the code and datasets? \answerYes{See Section~\ref{sec:expts}.}
%   \item Did you include the license to the code and datasets? \answerYes{The code and the data are proprietary.}
%   \item Did you include the license to the code and datasets? \answerNA{}
% \end{itemize}
% Please do not modify the questions and only use the provided macros for your
% answers.  Note that the Checklist section does not count towards the page
% limit.  In your paper, please delete this instructions block and only keep the
% Checklist section heading above along with the questions/answers below. \todo{Delete instructions}
% %%% END INSTRUCTIONS %%%

\begin{enumerate}

\item For all authors...
\begin{enumerate}
  \item Do the main claims made in the abstract and introduction accurately reflect the paper's contributions and scope?
    \answerYes{See Abstract and Section \S\ref{secn:intro}.}
  \item Did you describe the limitations of your work?
    \answerYes{See Section \S\ref{secn:conclusion}.}
  \item Did you discuss any potential negative societal impacts of your work?
    \answerYes{See Section \S\ref{secn:conclusion}.}
  \item Have you read the ethics review guidelines and ensured that your paper conforms to them?
    \answerYes{}
\end{enumerate}

\item If you are including theoretical results...
\begin{enumerate}
  \item Did you state the full set of assumptions of all theoretical results?
    \answerYes{See Sections \S\ref{sec:LFA}, \S\ref{sec:whichmethod}, and Appendix.}
  \item Did you include complete proofs of all theoretical results?
    \answerYes{See Sections \S\ref{sec:LFA}, \S\ref{sec:whichmethod}, and Appendix.}
\end{enumerate}

\item If you ran experiments...
\begin{enumerate}
  \item Did you include the code, data, and instructions needed to reproduce the main experimental results (either in the supplemental material or as a URL)?
    \answerYes{We include a zip file with the code in the supplementary material. The code can also be found at the following repository: \url{https://github.com/AI4LIFE-GROUP/lfa}.}
  \item Did you specify all the training details (e.g., data splits, hyperparameters, how they were chosen)?
    \answerYes{See Appendix.}
  \item Did you report error bars (e.g., with respect to the random seed after running experiments multiple times)?
    \answerYes{See Figure~\ref{fig:exp3-ptest} and Appendix.}
  \item Did you include the total amount of compute and the type of resources used (e.g., type of GPUs, internal cluster, or cloud provider)?
    \answerYes{See Appendix.}
\end{enumerate}

\item If you are using existing assets (e.g., code, data, models) or curating/releasing new assets...
\begin{enumerate}
  \item If your work uses existing assets, did you cite the creators?
    \answerYes{See Section~\S\ref{sec:expts}.}
  \item Did you mention the license of the assets?
    \answerNA{}
  \item Did you include any new assets either in the supplemental material or as a URL?
    \answerNA{}
  \item Did you discuss whether and how consent was obtained from people whose data you're using/curating?
    \answerNA{We do not directly obtain data from individuals.}
  \item Did you discuss whether the data you are using/curating contains personally identifiable information or offensive content?
    \answerYes{To our knowledge, the data contains no such information nor content. See Appendix.}
\end{enumerate}

\item If you used crowdsourcing or conducted research with human subjects...
\begin{enumerate}
  \item Did you include the full text of instructions given to participants and screenshots, if applicable?
    \answerNA{We did not use crowdsourcing nor did we conduct research with human subjects.}
  \item Did you describe any potential participant risks, with links to Institutional Review Board (IRB) approvals, if applicable?
    \answerNA{We did not use crowdsourcing nor did we conduct research with human subjects.}
  \item Did you include the estimated hourly wage paid to participants and the total amount spent on participant compensation?
    \answerNA{We did not use crowdsourcing nor did we conduct research with human subjects.}
\end{enumerate}

\end{enumerate}

%%%%%%%%%%%%%%%%%%%%%%%%%%%%%%%%%%%%%%%%%%%%%%%%%%%%%%%%%%%%

\newpage
\appendix

\section{Appendix}

\subsection{Proofs: Existing Methods are Instances of the LFA Framework (Section~\ref{sec:LFA})}
\label{app:proofs-all}

\subsubsection{LIME}\label{app:proof-lime}

The instance of the LFA framework with (1) interpretable model class $\mathcal{G}$ as the class of linear models where $g(\X) = \mathbf{w}^\top \X$, (2) perturbations of the form $\X_\xi = \X_0 \odot \xi$ where $\xi \, (\in \{0, 1\}^d) \sim \pi_{\X_0}$ with $\pi_{\X_0}$ being the exponential kernel (defined below), and (3) loss function as squared-error loss given by $\ell(f, g, \X_0, \xi) = (f(\X_\xi) - g(\xi))^2$ is equivalent to \LIME. 

As defined in \cite{ribeiro2016lime} (Section 3.4), the exponential kernel $\pi_{\X_0}(\xi) \propto exp \{ - \frac{D(\X_0, \X_\xi)}{\sigma^2} \}$ with distance function $D$ (such as cosine distance or L2 distance) and width $\sigma$.

\begin{proof}
For this instance of the LFA framework, by definition, the interpretable model $g$ is given by:

\begin{math}
    \begin{aligned}
        g^* &= \argmin_{g \in \mathcal{G}} \E_{\xi \sim \pi_{\X_0}} \ell(f, g, \X_0, \xi) \\
        &= \argmin_{g \in \mathcal{G}} \E_{\xi \sim p} [\ell(f, g, \X_0, \xi) \cdot \pi_{\X_0}(\xi)] \text{ where $p$ is the Bernouilli(0.5) distribution}\\
    \end{aligned}
\end{math}

Through importance sampling using a Bernouilli(0.5) proposal distribution (i.e., a Uniform(0,1) distribution over the space of binary inputs), the optimization setting of the LFA framework is that described for \LIME by \citet{ribeiro2016lime} (Equations 1 and 2).
\end{proof}

\subsubsection{KernelSHAP}\label{app:proof-kernelshap}

The instance of the LFA framework with (1) interpretable model class $\mathcal{G}$ as the class of linear models where $g(\X) = \mathbf{w}^\top \X$, (2) perturbations of the form $\X_\xi = \X_0 \odot \xi$ where $\xi \, (\in \{0, 1\}^d) \sim \pi$ with $\pi$ being the Shapley kernel (defined below), and (3) loss function as squared-error loss given by $\ell(f, g, \X_0, \xi) = (f(\X_\xi) - g(\xi))^2$ is equivalent to \KSHAP. 

As defined in \cite{lundberg2017shap} (Theorem 2), the Shapley kernel $\pi(\xi) \propto \frac{M-1}{{M \choose k} \cdot k \cdot (M-k)}$ where $M$ is the total number of elements in $\xi$ and $k$ is the number of ones in $\xi$.

\begin{proof}
For this instance of the LFA framework, by definition, the interpretable model $g$ is given by:

\begin{math}
    \begin{aligned}
        g^* &= \argmin_{g \in \mathcal{G}} \E_{\xi \sim \pi} \ell(f, g, \X_0, \xi) \\
        &= \argmin_{g \in \mathcal{G}} \E_{\xi \sim p} [\ell(f, g, \X_0, \xi) \cdot \pi(\xi)] \text{ where $p$ is the Bernouilli(0.5) distribution}\\
    \end{aligned}
\end{math}

Through importance sampling using a Bernouilli(0.5) proposal distribution (i.e., a Uniform(0,1) distribution over the space of binary inputs), the optimization setting of the LFA framework is that described for \KSHAP by \citet{lundberg2017shap} (Equation 2 and Theorem 2).
\end{proof}

\subsubsection{Occlusion}

The instance of the LFA framework with (1) interpretable model class $\mathcal{G}$ as the class of linear models where $g(\X) = \mathbf{w}^\top \X$, (2) perturbations of the form $\X_\xi = \X_0 \odot \xi$ where $\xi \, (\in \{0, 1\}^d)$ is a random one-hot vector, and (3) loss function as squared-error loss given by $\ell(f, g, \X_0, \xi) = (\Delta f - g(\xi))^2$ where $\Delta f =  f(\X_0) - f(\X_0 (1-\xi))$ converges to \Occlusion.

\begin{proof}
This instance of the LFA framework optimizes $g(\xi)$ to approximate $\Delta f$. For $\xi_i$ (a one-hot vector with element $i$ equal to 1), $g(\xi_i) = w_i$ and $\Delta f_i$ is the difference in the model prediction when feature $i$ takes the original value versus when feature $i$ is set to zero. $\Delta f$ is the definition of explanations generated by \Occlusion. Thus, in this instance of the LFA framework, the weights of $g$ recover the explanations of \Occlusion.
\end{proof}

\subsubsection{C-LIME}

The instance of the LFA framework with (1) interpretable model class $\mathcal{G}$ as the class of linear models where $g(\X) = \mathbf{w}^\top \X$, (2) perturbations of the form $\X_\xi = \X_0 + \xi$ where $\xi \, (\in \mathbb{R}^d) \sim \text{Normal}(0, \sigma^2)$, and (3) loss function as squared-error loss given by $\ell(f, g, \X_0, \xi) = (f(\X_\xi) - g(\xi))^2$ is equivalent to \CLIME.

\begin{proof}
This instance of the LFA framework is equivalent to \CLIME by definition of \CLIME.
\end{proof}

\subsubsection{SmoothGrad}\label{proof-smoothgrad}

In this section, we provide two derivations showing the connection between the LFA framework and \SmoothGrad. When using gradient-matching loss, the instance of the LFA framework is exactly equivalent to \SmoothGrad given the same $n$ perturbations. When using squared-error loss, the instance of the LFA framework is equivalent to \SmoothGrad asymptotically for a large number of perturbations.

\textbf{Gradient-matching loss function}

This instance of the LFA framework with 
(1) interpretable model class $\mathcal{G}$ as the class of linear models where $g(\X) = \mathbf{w}^\top \X$, 
(2) perturbations of the form $\X_\xi = \X_0 + \xi$ where $\xi \, (\in \mathbb{R}^d) \sim \text{Normal}(0, \sigma^2)$, and 
(3) loss function as gradient-matching loss given by $\ell_{gm}(f, g, \X_0, \xi) = \| \nabla_{\xi} f(\X_\xi) - \nabla_{\xi} g(\xi) \|_2^2$ 
is equivalent to \SmoothGrad. In other words, for the same $n$ perturbations, this instance of the LFA framework and \SmoothGrad yield the same explanation.

\begin{proof}
For this instance of the LFA framework, by definition, the interpretable model $g$ is given by $g^* = \argmin_{g \in \mathcal{G}} L$ where:

\begin{math}
\begin{aligned}
    L &= \mathbb{E}_\xi \ell(f, g, \X_0, \xi) \\
    &= \frac{1}{n} \sum_{n} \| \nabla_{\xi} f(\X_\xi) - \nabla_{\xi} g(\xi) \|_2^2 \\
    &= \frac{1}{n} \sum_{n} \| \nabla_{\X_0} f(\X_\xi) -  \mathbf{w} \|_2^2
\end{aligned}
\end{math}

To derive the solution for $\mathbf{w}$, take the partial derivative of $L$ w.r.t. $\mathbf{w}$, set the partial derivative to zero, and solve for $\mathbf{w}$. 

\begin{math}
\begin{aligned}
\nabla_{\mathbf{w}} L = 0 \Rightarrow 
(-2) \frac{1}{n} \sum_{n} [ \nabla_{\X_0} f(\X_\xi) -  \mathbf{w} ] = 0 \Rightarrow
\mathbf{w} = \frac{1}{n} \sum_{n} \nabla_{\X_0} f(\X_\xi)
\end{aligned}
\end{math}

Therefore, for the same $n$ perturbations, the weights $\mathbf{w}$ of the interpretable model $g$ are equivalent to the \SmoothGrad explanations.
\end{proof}

\textbf{Squared-error loss function}

Consider the instance of the LFA framework corresponding to \SmoothGrad described above, except with loss function as squared-error loss given by $\ell(f, g, \X_0, \xi) = (f(\X_\xi) - g(\xi))^2$. This instance of the LFA framework converges to \SmoothGrad in expectation. Note that this instance of the LFA framework is \CLIME and its convergence to \SmoothGrad in expectation is consistent with the results of \cite{agarwal2021clime} which previously proved the same convergence (using a different approach).

\begin{proof}
For this instance of the LFA framework, by definition, the interpretable model $g$ is given by $g^* = \argmin_{g \in \mathcal{G}} L$ where:

\begin{math}
\begin{aligned}
    L &= \mathbb{E}_\xi \ell(f, g, \X_0, \xi) \\
    &= \mathbb{E}_\xi [ (f(\X_\xi) - g(\xi))^2 ] \\
    &= \mathbb{E}_\xi [ (f(\X_\xi) - \mathbf{w}^\top \xi)^2 ] \\
\end{aligned}
\end{math}

To derive the solution for $\mathbf{w}$, take the partial derivative of $L$ w.r.t. $\mathbf{w}$, set the partial derivative to zero, and solve for $\mathbf{w}$. 

\begin{math}
\begin{aligned}
    \nabla_{\mathbf{w}} L &= 0 \\
    -2 \mathbb{E}_\xi[(f(\X_\xi) - \mathbf{w}^\top \xi) \xi^\top] &= 0 \\
    \mathbb{E}_\xi[f(\X_\xi) \xi^\top - \mathbf{w}^\top \xi \xi^\top] &= 0 \\
    \mathbb{E}_\xi[f(\X_\xi) \xi^\top] - \mathbf{w}^\top \mathbb{E}[\xi \xi^\top] &= 0 \\
    \sigma^2 \mathbb{E}_\xi[\nabla_{\X_\xi} f(\X_\xi)^\top] - \sigma^2 \mathbf{w}^\top &= 0 \textnormal{ by Stein's Lemma}\\
    \sigma^2 \mathbb{E}_\xi[\nabla_{\X_0} f(\X_\xi)^\top] - \sigma^2 \mathbf{w}^\top &= 0\\
    \mathbf{w} &= \mathbb{E}_\xi[\nabla_{\X_0} f(\X_\xi)] \\
\end{aligned}
\end{math}

Therefore, the weights $\mathbf{w}$ of the interpretable model $g$ converge to \SmoothGrad explanations in expectation.
\end{proof}

% \todo{for all methods below, need to 1) update notation and 2) always start with $\nabla_{\epsilon}f$ and show that it changes to $\nabla_{x}f$ (equivalent for additive methods; need to apply chain rule for multiplicative methods)}

\subsubsection{Vanilla gradients}

Consider the instance of the LFA framework corresponding to \SmoothGrad described above (with loss function as either squared-error loss or gradient-matching loss). As $\sigma \rightarrow 0$, this instance of the LFA framework converges to \Grads.

\begin{proof}
Starting with the solution for $\mathbf{w}$ derived for \SmoothGrad, take the limit of $\mathbf{w}$ as $\sigma \rightarrow 0^+$.

\begin{math}
\begin{aligned}
    \lim_{\sigma \to 0^+} \mathbf{w} &= \lim_{\sigma \to 0^+} \mathbb{E}_\xi[\nabla_{\X_0} f(\X_\xi)] \\
    &= \lim_{\sigma \to 0^+}  \int_{-\infty}^{\infty} \nabla_{\X_0} f(\X_\xi) \, p(\xi; 0, \sigma)\, d\xi \\
    &= \lim_{\sigma \to 0^+} \int_{-\infty}^{\infty} \nabla_{\X_0} f(\X_0 + \xi) \, \eta_\xi(\xi)\, d\xi \text{ where $\eta_\xi(\xi)=p(\xi; 0, \sigma)$} \\
    &= \nabla_{\X_0} f(\X_0) \textnormal{ by property of the Dirac delta distribution}\\
\end{aligned}
\end{math}

To derive the third line from the second line, we view the Normal density function $p(\xi; 0, \sigma)$ as a nascent delta function $\eta_\xi(\xi)$ (which is defined such that $\lim_{\sigma \to 0^+} \int_{-\infty}^{\infty} p(\xi; 0, \sigma) = \delta(\xi)$, where $\delta$ is the Dirac delta distribution) and by assuming that $\nabla_{\X_0} f(\X_\xi)$ has a compact support.

% old proof that requires swapping the order of the integrals
% \begin{math}
% \begin{aligned}
%     \lim_{\sigma \to 0} \mathbf{w} &= \lim_{\sigma \to 0} \mathbb{E}[\nabla_{\X_0} f(\X_\xi)] \\
%     &= \lim_{\sigma \to 0}  \int_{-\infty}^{\infty} \nabla_{\X_0} f(\X_\xi) \, p(\xi; 0, \sigma)\, d\xi \\
%     &= \int_{-\infty}^{\infty} \nabla_{\X_0} f(\X_\xi) \, \lim_{\sigma \to 0} p(\xi; 0, \sigma)\, d\xi \textnormal{ \todo{conditions for swapping limit and integral}} \\
%     &= \int_{-\infty}^{\infty} \nabla_{\X_0} f(\X_\xi) \, \delta(\xi)\, d\xi \textnormal{ where $\delta$ is the Dirac delta distribution} \\
%     &= \nabla_{\X_0} f(\X_0) \textnormal{ by property of the Dirac delta distribution}
% \end{aligned}
% \end{math}

Therefore, the weights $\mathbf{w}$ of the interpretable model $g$ converge to \Grads explanations.
\end{proof}

\subsubsection{Integrated Gradients}

This instance of the LFA framework with 
(1) interpretable model class $\mathcal{G}$ as the class of linear models where $g(\X) = \mathbf{w}^\top \X$, 
(2) perturbations of the form $\X_\xi = \X_0 \odot \xi$ where $\xi \, (\in \mathbb{R}^d) \sim \text{Uniform}(0, 1)$, and 
(3) loss function as gradient-matching loss given by $\ell_{gm}(f, g, \X_0, \xi) = \| \nabla_{\xi} f(\X_\xi) - \nabla_{\xi} g(\xi) \|_2^2$ 
is equivalent to \IntGrad. In other words, for the same $n$ perturbations, this instance of the LFA framework and \IntGrad yield the same explanation.

\begin{proof}
For this instance of the LFA framework, by definition, the interpretable model $g$ is given by $g^* = \argmin_{g \in \mathcal{G}} L$ where:

\begin{math}
\begin{aligned}
    L &= \mathbb{E}_\xi \ell(f, g, \X_0, \xi) \\
    &= \mathbb{E}_\xi \| \nabla_{\xi} f(\X_\xi) - \nabla_{\xi} g(\xi) \|_2^2  \\
    &= \mathbb{E}_\xi \| \nabla_{\X_0} f(\X_\xi) \odot \X_0 - \mathbf{w} \|_2^2 \\
\end{aligned}
\end{math}

Note that, by the chain rule, $\nabla_{\xi} f(\X_\xi) = \nabla_{\xi} f(\X_0 \odot \xi) =  \nabla_{\X_\xi} f(\X_\xi)  \odot \nabla_{\xi} \X_\xi = \nabla_{\X_0} f(\X_\xi)  \odot \X_0$.

To derive the solution for $\mathbf{w}$, take the partial derivative of $L$ w.r.t. $\mathbf{w}$, set the partial derivative to zero, and solve for $\mathbf{w}$. 

\begin{math}
\begin{aligned}
    \nabla_{\mathbf{w}} L &= 0 \\
    -2 \mathbb{E}_\xi[\nabla_{\X_0} f(\X_\xi) \odot \X_0 - \mathbf{w}] &= 0 \\
    \mathbf{w} &= \X_0 \odot \mathbb{E}_\xi[\nabla_{\X_0} f(\X_\xi)]\\
\end{aligned}
\end{math}

Therefore, the weights $\mathbf{w}$ of the interpretable model $g$ converge to \IntGrad explanations in expectation.
\end{proof}

\subsubsection{Gradient $\times$ Input}

Consider the instance of the LFA framework corresponding to \IntGrad described above, except with $\xi \, (\in \mathbb{R}^d) \sim \text{Uniform}(a, 1)$. As $a \rightarrow 1$, this instance of the LFA framework converges to \GradtimesInput.

\begin{proof}
As $a \rightarrow 1$, $\xi \rightarrow \vec{1}$, and $\mathbf{w} \rightarrow \X_0  \odot \nabla_{\X_0} f(\X_0)$. Therefore, the weights $\mathbf{w}$ of the interpretable model $g$ converge to \GradtimesInput explanations.
\end{proof}

\subsection{Which Explanations Are Not Function Approximations?}\label{app:notLFA}

In this section, we briefly discuss explanation methods that cannot be viewed as instances of the LFA framework. In the cases below, the lack of connection to the LFA framework is mainly due to a property of the explanation method.

\textbf{Model-independent methods.} Some explanation methods are known to produce attributions that are independent of the model they intend to explain. These methods cannot be cast in the LFA framework in a meaningful way due to the model recovery conditions we impose. Such model-independent methods include guided backpropagation \cite{springenberg2015striving} and DeconvNet \cite{noh2015learning}, following theory by \citet{nie2018theoretical}, as well as logit-gradient-based methods \cite{srinivas2021rethinking} such as Grad-CAM \cite{selvaraju2017grad}, Grad-CAM++ \cite{chattopadhay2018grad}, and FullGrad \cite{srinivas2019full}.

%cite guided backprop
%cite deconvnet

\textbf{Modified-backpropagation methods.} Some explanation methods such as DeepLIFT \cite{shrikumar2017learning}, guided backpropagation \cite{springenberg2015striving}, DeconvNet \cite{noh2015learning}, and layer-wise relevance propagation \cite{bach2015pixel} work by modifying the backpropagation equations and propagating attributions using finite-difference-like 
methods. Such methods violate an important property called ``implementation invariance'', first identified by \citet{sundararajan2017integratedgrad}, which states that two functionally 
identical models can have different attributions due to the lack of a chain rule for modified backpropagation methods. This property ensures that such methods cannot be function approximators, as the attribution changes based on the function implementation.

%\textbf{Latent Feature-Based Explanations}: Some popular feature-based explanations (Grad-CAM, Grad-CAM++, FullGrad) 
%can indeed be function approximators in certain cases. A common idea in feature-based explanations is to consider gradients of intermediate layers, i.e., for an intermediate feature representation $z(\X) \in \R^l$ we have the 
%gradients $\nabla_{z} f(\X)$. Let $f^{-1}: z \rightarrow \X$ be an inverse mapping from $z$ to $\X$, 
%if it exists. For our results above gradients can be thought of as performing an infinitsimal perturbation 
%on the input space, and this feature gradients perform infinitsimal pertrubation on the feature space $z$.
%We thus have a feature space perturbation neighborhood $z' = z + \xi$, where $\xi = \mathcal{N}(0, \sigma^2), \sigma \rightarrow 0$. If the inverse map above exists, then the equivalent perturbation in the 
%input domain would be $\X' = f^{-1}(z')$. The resulting perturbation would not have a closed form 
%expression, but can only be written implicitly. Note however that the important tool to make the connection, the inverse function need not exist for arbitrary functions, and thus the function approximation viewpoint 
%does not hold in such cases. 

\textbf{Unsigned-gradient methods.} Some gradient-based methods return unsigned attribution values instead of the full signed values. Such methods can be written in the LFA framework using the following loss function $\ell(f, g, \X_0, \xi) = \| |\grad f(\X_0 \oplus \xi)| - \mathbf{w}_g \|^2$ where $\mathbf{w}_g$ consists of the weights of the interpretable model $g$. Using this loss function with different choices for neighborhoods gives unsigned versions of 
different gradient methods. However, this loss function is not a valid loss function, i.e., $\ell = 0 \centernot\implies f = g$. Using this loss function, $\mathbf{w}_g$ is always positive and thus cannot recover an underlying model's negative weights.

\subsection{Proof: No Free Lunch Theorem (Section~\ref{sec:whichmethod})}\label{app:nfl-theorem}

\begin{thm*}
Consider explaining a black-box model $f$ around point $\X_0$ using an interpretable model $g$ from model class $\mathcal{G}$ and a valid loss function $\ell$ where the distance between $f$ and $\mathcal{G}$ is given by $d(f, \mathcal{G}) = \min_{g \in \mathcal{G}} \max_{\X \in \mathcal{X}} \ell(f,g,0,\X)$.

Then, for any explanation $g^*$ over a neighbourhood distribution $\xi_1~\sim~ \mathcal{Z}_1$ such that $\max_{\xi_1} \ell(f,g^*,\X_0, \xi_1) \leq \epsilon$, there always exists another neighbourhood $\xi_2 \sim \mathcal{Z}_2$ such that $\max_{\xi_2} \ell(f,g^*,\X_0, \xi_2) \geq d(f, \mathcal{G})$.
\end{thm*}

\begin{proof}
Given an explanation $g^*$, we can find an "adversarial" input $\X_{adv}$ such that $\X_{adv} = \arg\max_{\X \in \mathcal{X}} \ell(f,g^*,0,\X)$ has a large error $\ell$. Construct perturbation $\X_2 = \X_0 + \xi_2$ such that $p(\xi_2) = \text{Uniform}(0, \X_{adv} - \X_0)$, which implies $p(\X_2) = \text{Uniform}(\X_0, \X_{adv})$. In this proof, $\text{Uniform}(a, b)$ denotes uniformly sampling along the straight line connecting $a$ and $b$.

By definition $\max_{\xi_2} \ell(f, g^*, \X_0, \xi_2) = \ell(f, g^*, \X_0, \X_{adv} - \X_0) = \max_{\X \in \mathcal{X}} \ell(f, g^*, 0, \X) \geq \min_{g \in \mathcal{G}} \max_{\X \in \mathcal{X}} \ell(f, g, 0, \X) = d(f, \mathcal{G})$
\end{proof}

A salient feature of this proof is that it makes no assumptions about the form of model, input or output domains. This implies that the result applies equally to discrete and continuous domains, regression and classification tasks, and for any model type.

\subsection{Summary of Properties of Existing Explanation Methods}
\label{app:summary-model-recovery}

\begin{table}[H]
    \centering
    \footnotesize
    \begin{tabular}{c|c|c|c}
        \textbf{Method} & \textbf{Characteristics of $\xi$} & \textbf{$g$ recovers $f$?} & \textbf{Scale of $g$'s weights when $\mathcal{X} \in \mathbb{R}^d$} \\
        \midrule
         C-LIME & Continuous, Additive & When $\mathcal{X} \in \mathbb{R}^d$ & Gradient \\
         SmoothGrad & Continuous, Additive & When $\mathcal{X} \in \mathbb{R}^d$ & Gradient \\
         Vanilla Gradients & Continuous, Additive & When $\mathcal{X} \in \mathbb{R}^d$ & Gradient \\
         \midrule
         Integrated Gradients & Continuous, Multiplicative & No & Gradient $\times$ Input \\
         Gradients $\times$ Input & Continuous, Multiplicative & No & Gradient $\times$ Input \\
         \midrule 
         LIME & Binary, Multiplicative & When $\mathcal{X} \in \{0, 1\}^d$ & Gradient $\times$ Input \\
         KernelSHAP & Binary, Multiplicative & When $\mathcal{X} \in \{0, 1\}^d$ & Gradient $\times$ Input \\
         Occlusion & Binary, Multiplicative & When $\mathcal{X} \in \{0, 1\}^d$ & Gradient $\times$ Input \\
    \end{tabular}
    \caption{Summary of properties of existing explanation methods in relation to the LFA framework. In this table, we consider the scale of $g$'s weights when $\mathcal{X} \in \mathbb{R}^d$.}  
    \label{table:summary}
\end{table}

\subsection{Setup of Experiments} \label{app:exp-info}

\textbf{Datasets.} The first dataset is the life expectancy dataset from the Global Health Observatory data repository of the World Health Organization (WHO) \cite{dataset2018who}. The WHO dataset consists of demographic, economic, and health factors of 193 countries from 2000 to 2015, including a country's population, gross domestic product, health expenditure, human development index, infant mortality rate, hepatitis B immunization rate, and life expectancy. The other dataset is the home equity line of credit (HELOC) dataset from the Explainable Machine Learning Challenge organized by FICO \cite{dataset2019heloc}. The HELOC dataset contains information on HELOC applications made by homeowners, including an applicant's installment balance, number of trades, longest delinquency period, and risk category (whether an applicant made payments without being 90 days overdue). To our knowledge, these datasets do not contain personally identifiable information or offensive content.

For the WHO dataset, missing values were imputed using kNN imputation with $k=5$. For the HELOC dataset, missing values were dropped. For both datasets, continuous features were mean-centered and then normalized to [0, 1] range.

\textbf{Models.} 
For the WHO dataset, we train four models: a linear regression model (train MSE: $9.39 \times 10^{-5}$; test MSE: $9.80 \times 10^{-5}$) and three feed-forward neural networks. The neural networks have 8-node hidden layers with tanh activation and a linear output layer. The first neural network has 3 hidden layers (train MSE: $7.83 \times 10^{-5}$; test MSE: $8.23 \times 10^{-5}$), the second has 5 hidden layers (train MSE: $7.76 \times 10^{-5}$; test MSE: $8.11 \times 10^{-5}$), and the third has 8 hidden layers (train MSE: $7.78 \times 10^{-5}$; test MSE: $8.20 \times 10^{-5}$). The neural networks are referred to as NN1, NN2, and NN3, respectively.

For the HELOC dataset, we train four models: a logistic regression model (train accuracy: 0.73; test accuracy: 0.74) and three feed-forward neural networks. The neural networks have 8-node hidden layers with relu activation and an output layer with sigmoid activation. The first neural network has 3 hidden layers (train accuracy: 0.75; test accuracy: 0.75), the second has 5 hidden layers (train accuracy: 0.75; test accuracy: 0.75), and the third has 8 hidden layers (train accuracy: 0.75; test accuracy: 0.75). The neural networks are referred to as NNA, NNB, and NNC, respectively.

Models were trained based on an 80/20 train/test split using stochastic gradient descent. Hyperparameters were selected to reach decent model performance. The emphasis is on generating explanations for individual model predictions, not on high model performance. Thus, we do not focus on tuning model hyperparameters. Linear and logistic regression models trained for 100 epochs while neural network models trained for 300 epochs. All models used a batch size of 64 and a cosine annealing scheduler for the learning rate. Hyperparameters for all models are included in the code accompanying this paper.

\textbf{Explanation Methods.}
Each explanation method is implemented using (1) the existing method and (2) the LFA framework. For (1), we used Meta's Captum library \cite{kokhlikyan2020captum}. When using Captum, methods with number of perturbations as a parameter (i.e., LIME, KernelSHAP, SmoothGrad, and Integrated Gradients) used 1000 perturbations, a number of perturbations at which explanations for the method converged. For (2), we implemented the LFA framework, instantiating each method based on Table~\ref{table:meta-algo-instances}. For each method, the number of perturbations is set to 1000 for the same reason above. The interpretable model $g$ is optimized using stochastic gradient descent. The perturbations are split into a train and test set (80/20 split) and $g^*$ is optimized based on test set performance.

Analyses were performed on GPUs. The total amount of compute is approximately 54 GPU-hours.

\subsection{Full Results for Experiments}
\label{app:exp-results}

\subsubsection{Experiment 1: Existing Methods Are Instances of the LFA Framework}
\label{app:exp1-lfa-all}

\newpage
% WHO, heatmaps
\begin{figure}[H]
    \centering
    \includegraphics[width=0.8\textwidth]{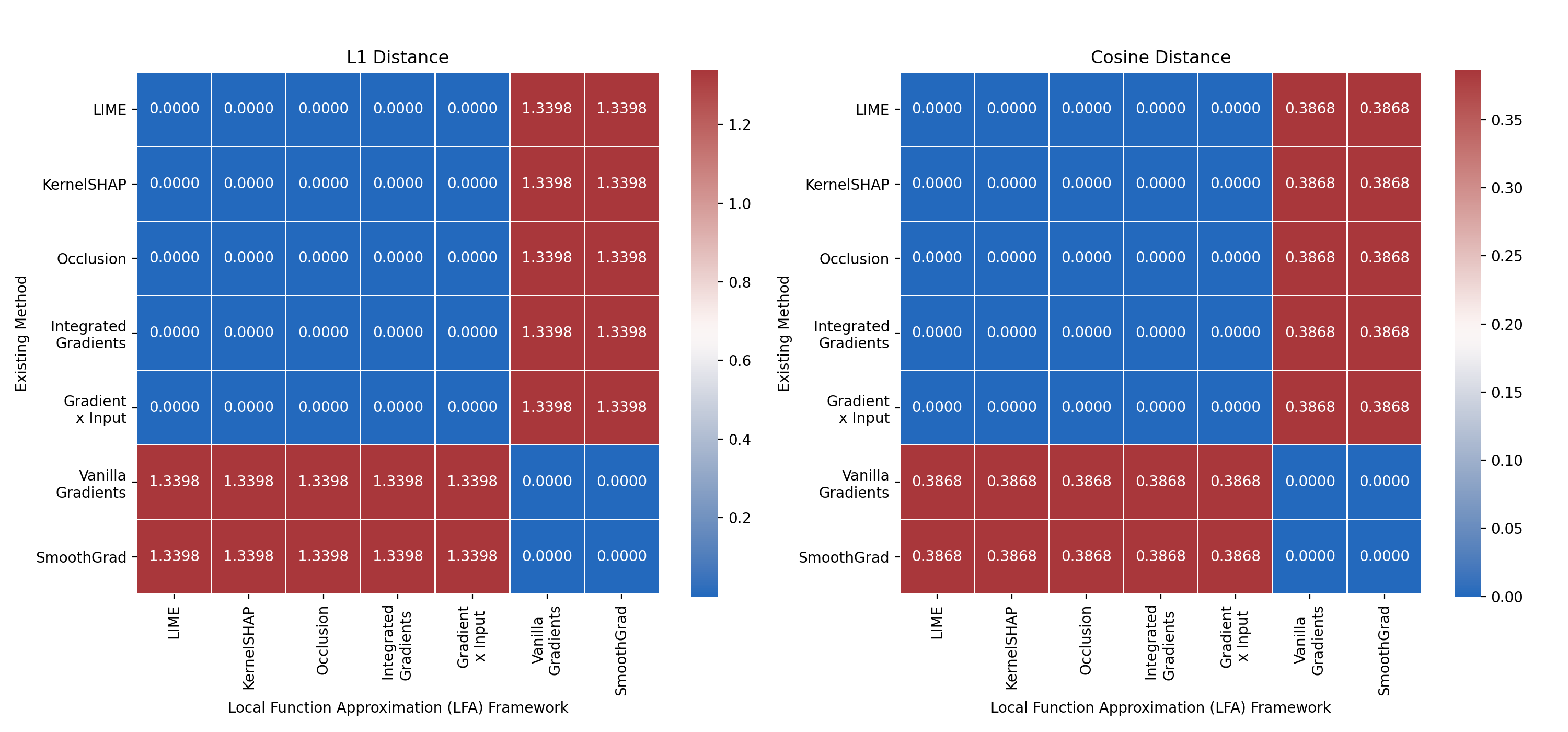}\vfill
    \includegraphics[width=0.8\textwidth]{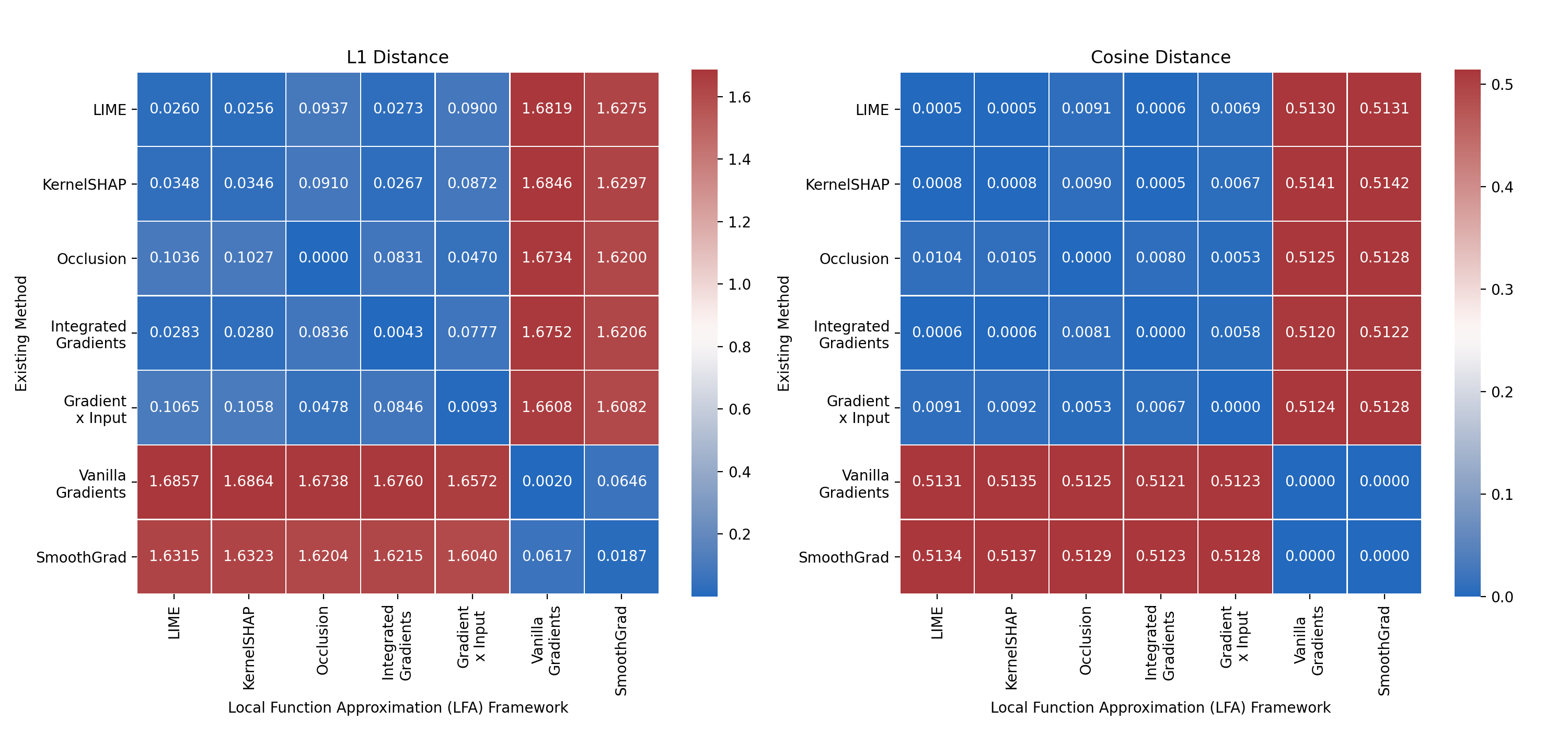}\vfill
    \includegraphics[width=0.8\textwidth]{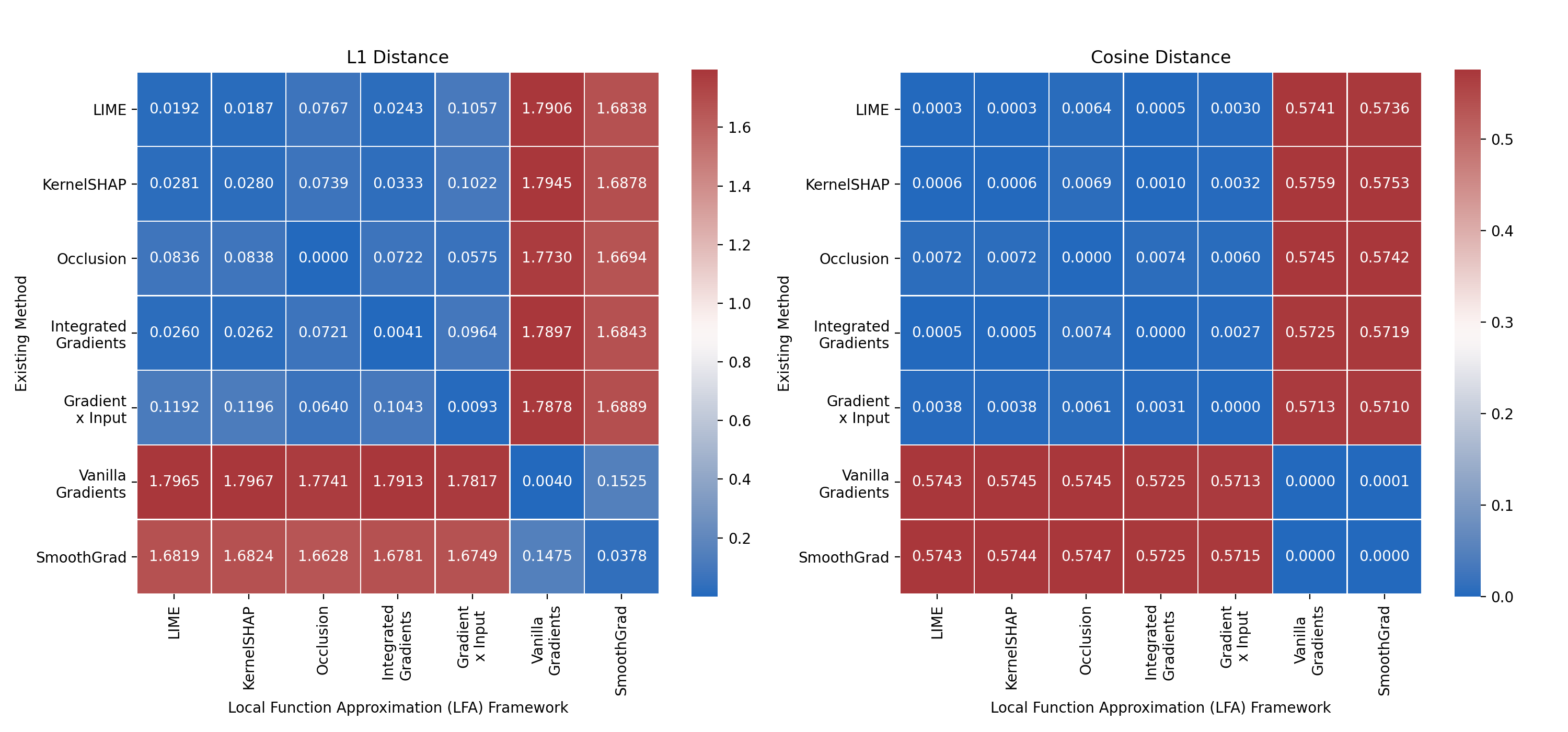}\vfill
    \includegraphics[width=0.8\textwidth]{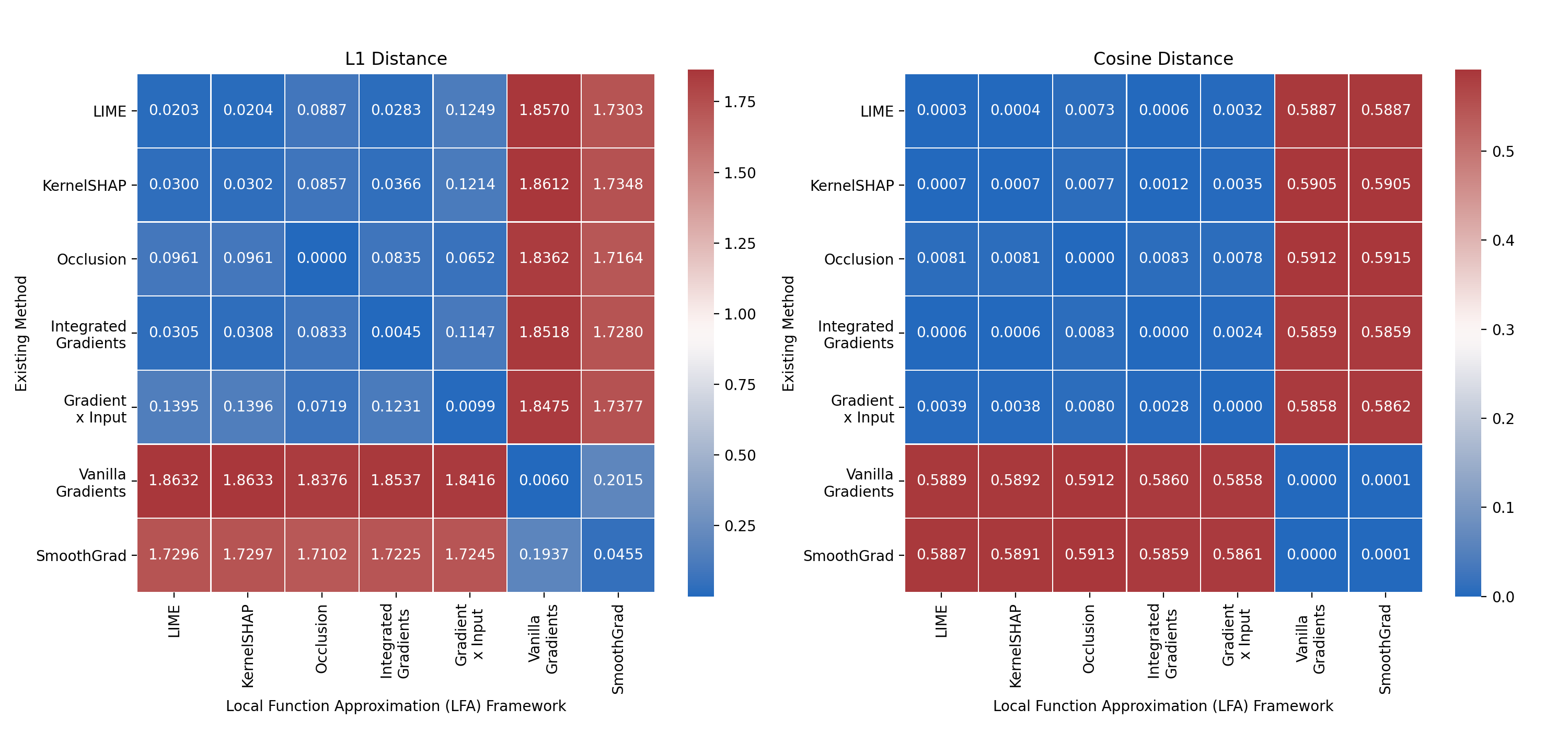}
    \caption{Correspondence of existing methods to instances of the LFA framework. Experiments performed on the WHO dataset for linear regression (Row 1), NN1 (Row 2), NN2 (Row 3), and NN3 (Row 4). The similarity of pairs of explanations are measured based on L1 distance (left column) and cosine distance (right column).}
\end{figure}

% WHO, vanilla gradients
\begin{figure}[H]
    \centering
    \includegraphics[width=0.6\textwidth]{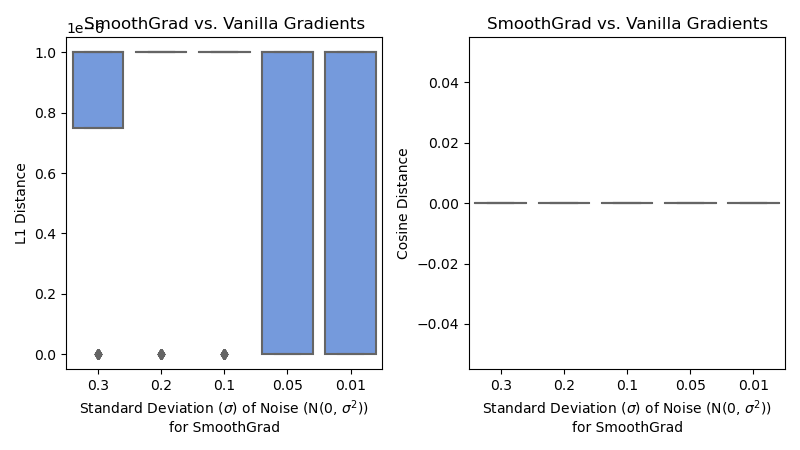}\vfill
    \includegraphics[width=0.6\textwidth]{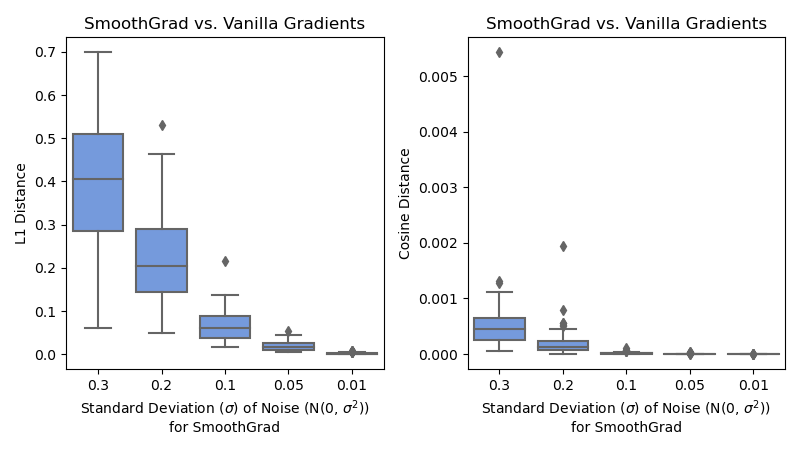}\vfill
    \includegraphics[width=0.6\textwidth]{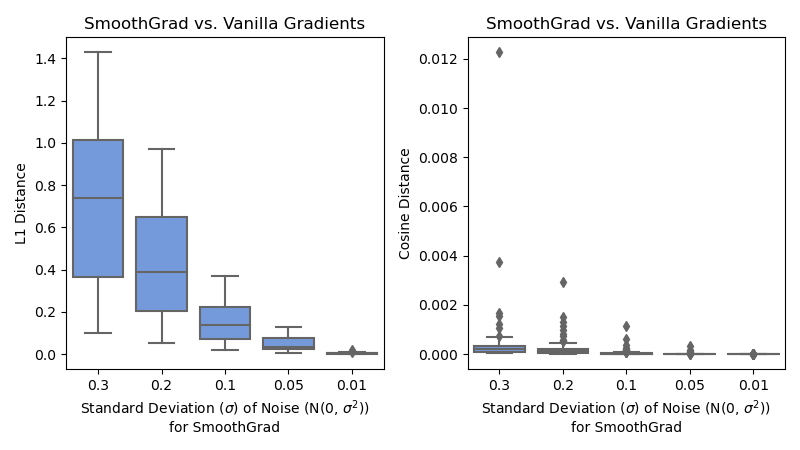}\vfill
    \includegraphics[width=0.6\textwidth]{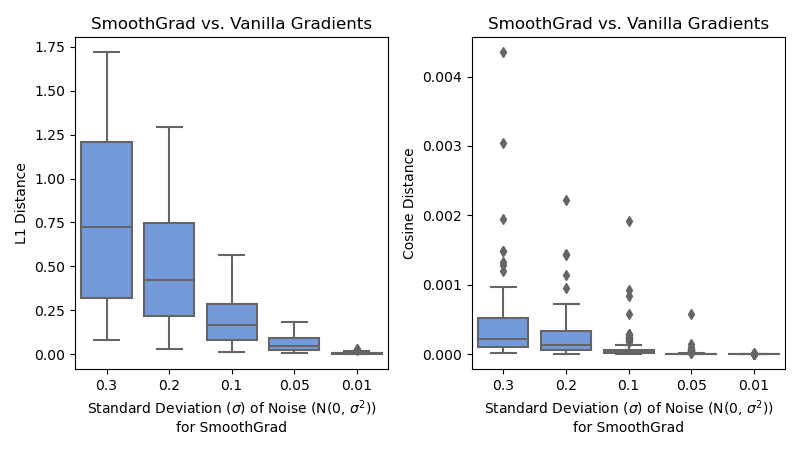}
    \caption{Using the LFA framework, explanations generated by SmoothGrad converge to those generated by Vanilla Gradients. Experiments performed on the WHO dataset for linear regression (Row 1), NN1 (Row 2), NN2 (Row 3), and NN3 (Row 4). The similarity of pairs of explanations are measured based on L1 distance (left column) and cosine distance (right column).}
\end{figure}

% WHO, gradient x input
\begin{figure}[H]
    \centering
    \includegraphics[width=0.6\textwidth]{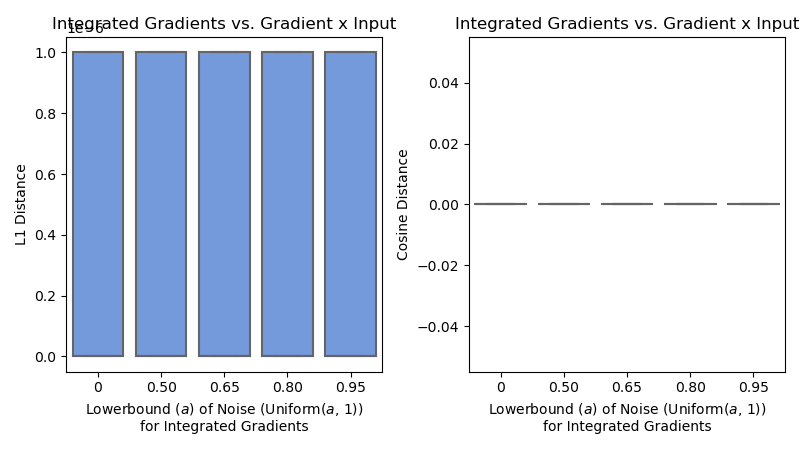}\vfill
    \includegraphics[width=0.6\textwidth]{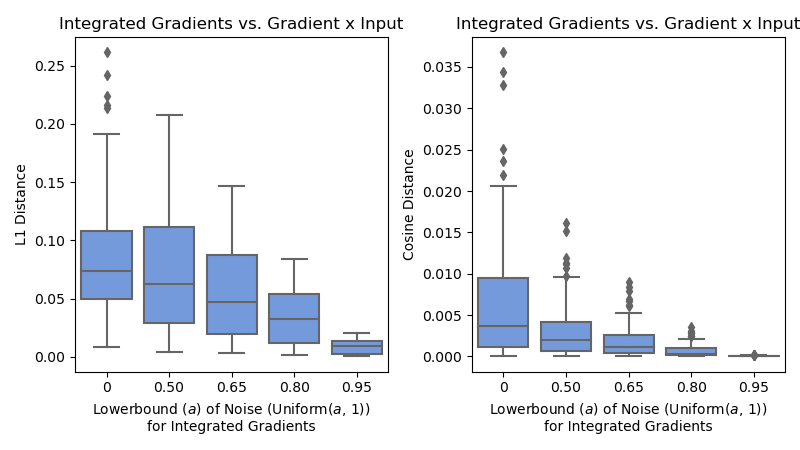}\vfill
    \includegraphics[width=0.6\textwidth]{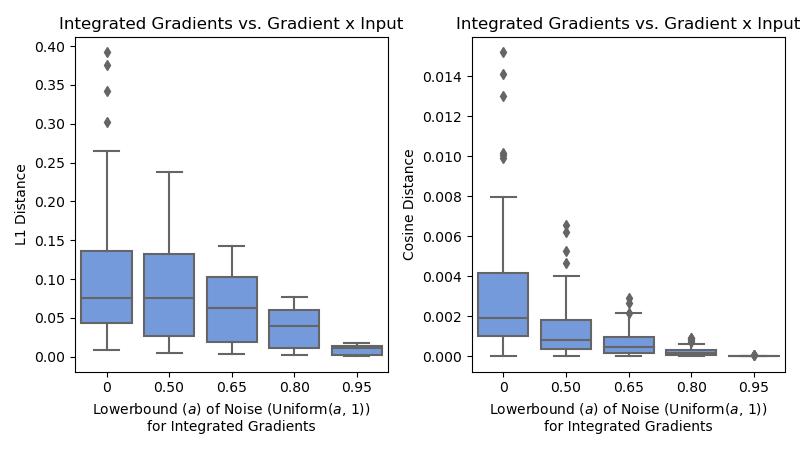}\vfill
    \includegraphics[width=0.6\textwidth]{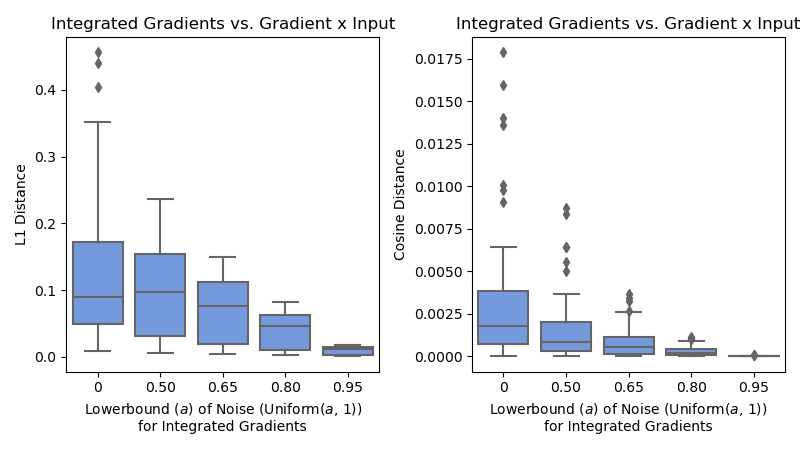}
    \caption{Using the LFA framework, explanations generated by Integrated Gradients converge to those generated by Gradient $\times$ Input. Experiments performed on the WHO dataset for linear regression (Row 1), NN1 (Row 2), NN2 (Row 3), and NN3 (Row 4). The similarity of pairs of explanations are measured based on L1 distance (left column) and cosine distance (right column).}
\end{figure}

% HELOC, heatmaps
\begin{figure}[H]
    \centering
    \includegraphics[width=0.8\textwidth]{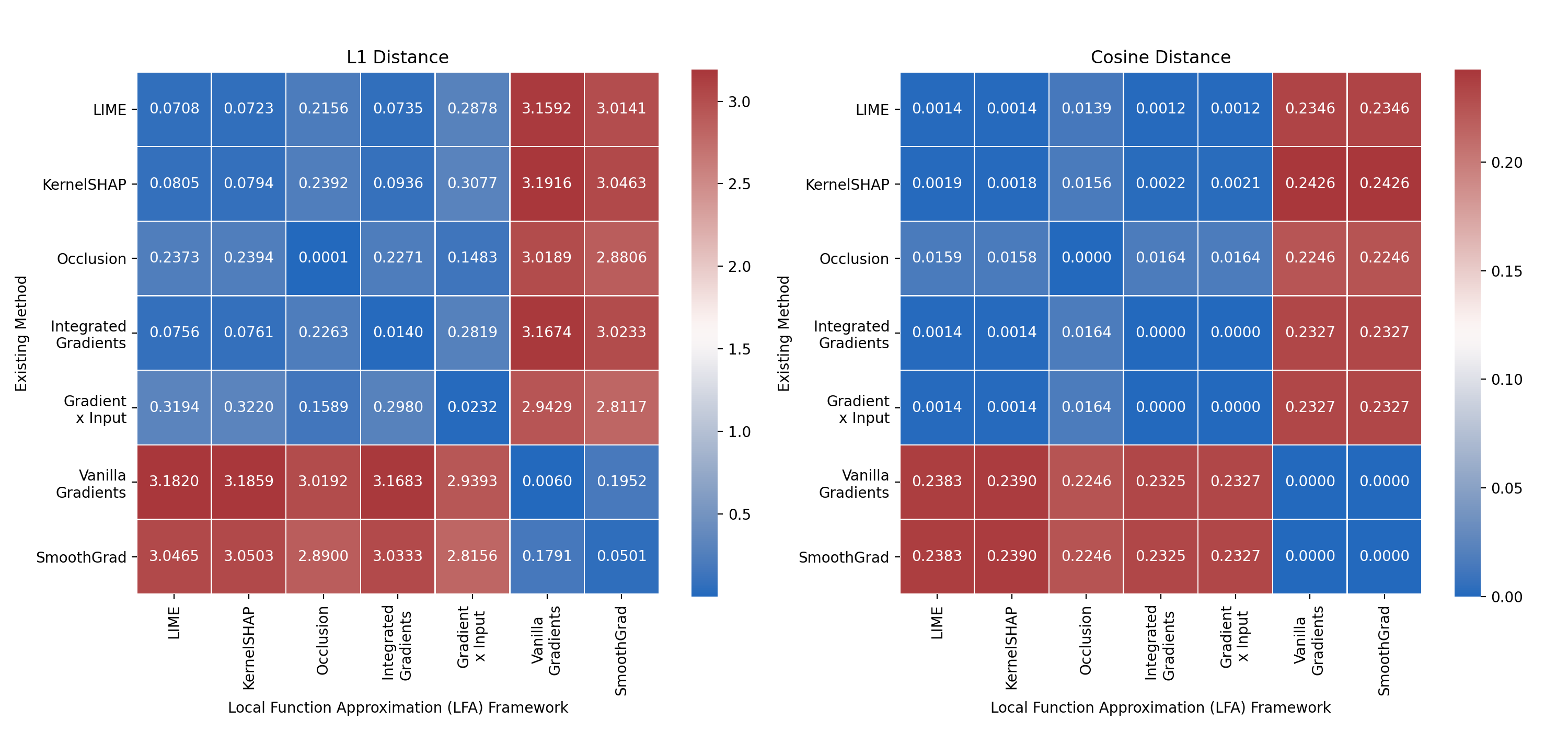}\vfill
    \includegraphics[width=0.8\textwidth]{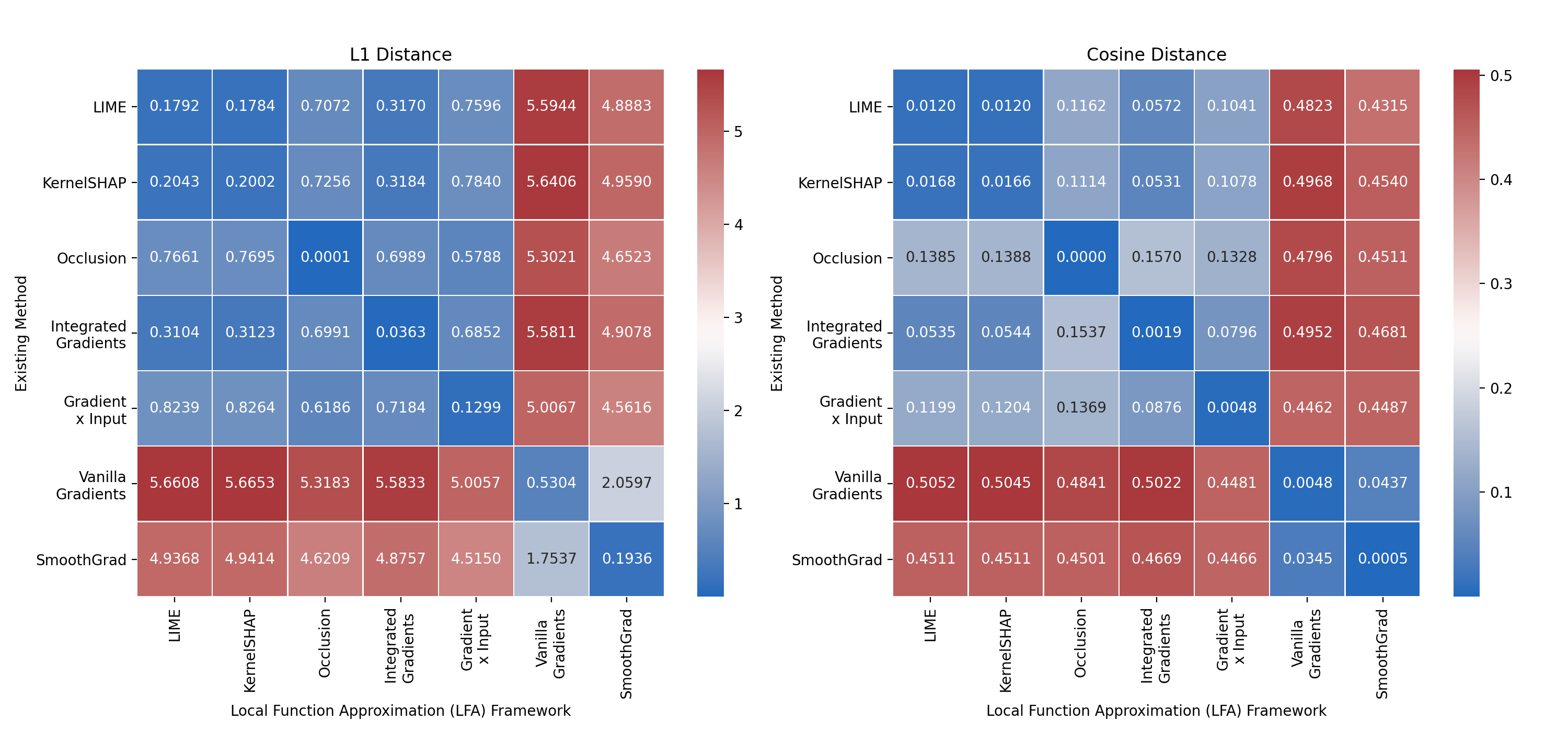}\vfill
    \includegraphics[width=0.8\textwidth]{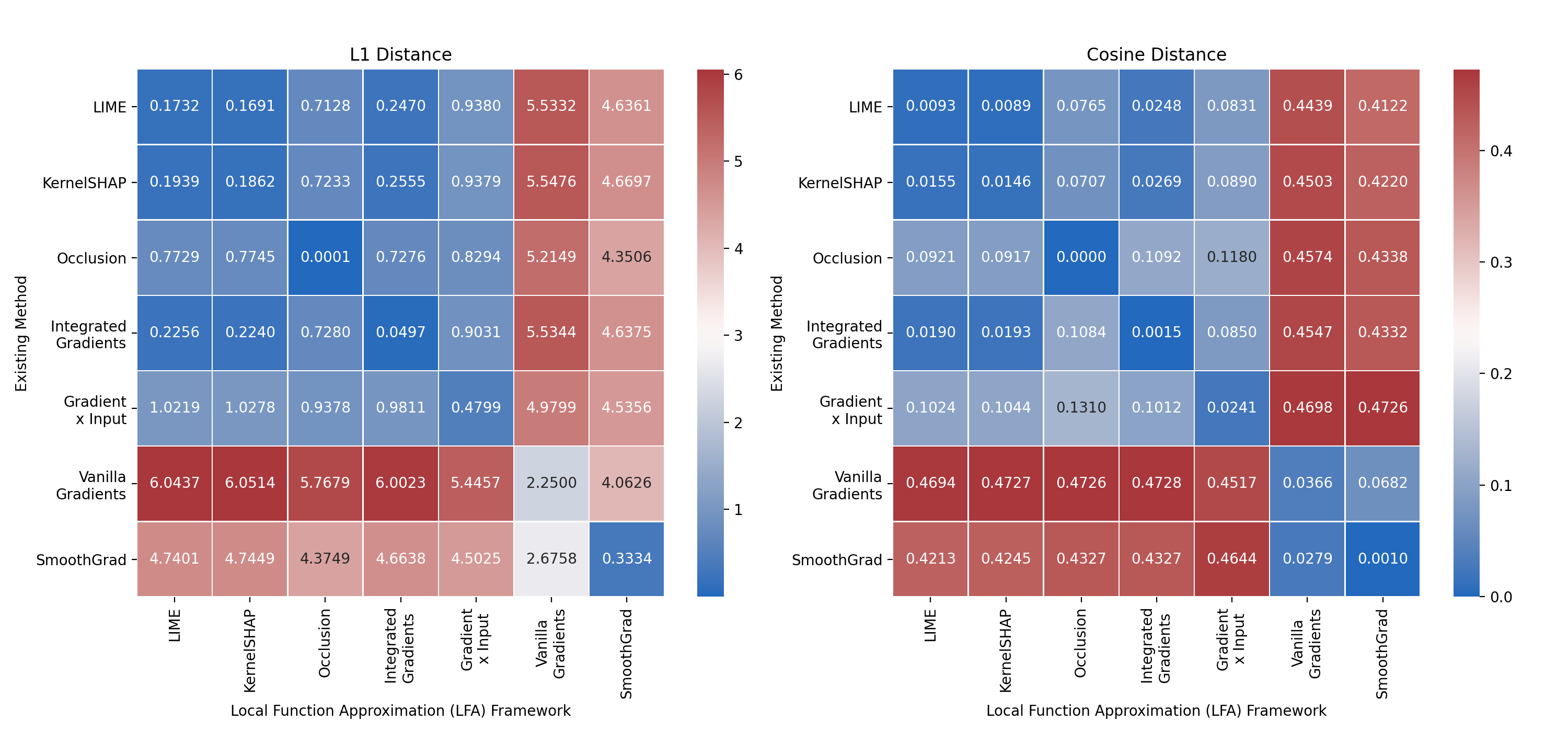}\vfill
    \includegraphics[width=0.8\textwidth]{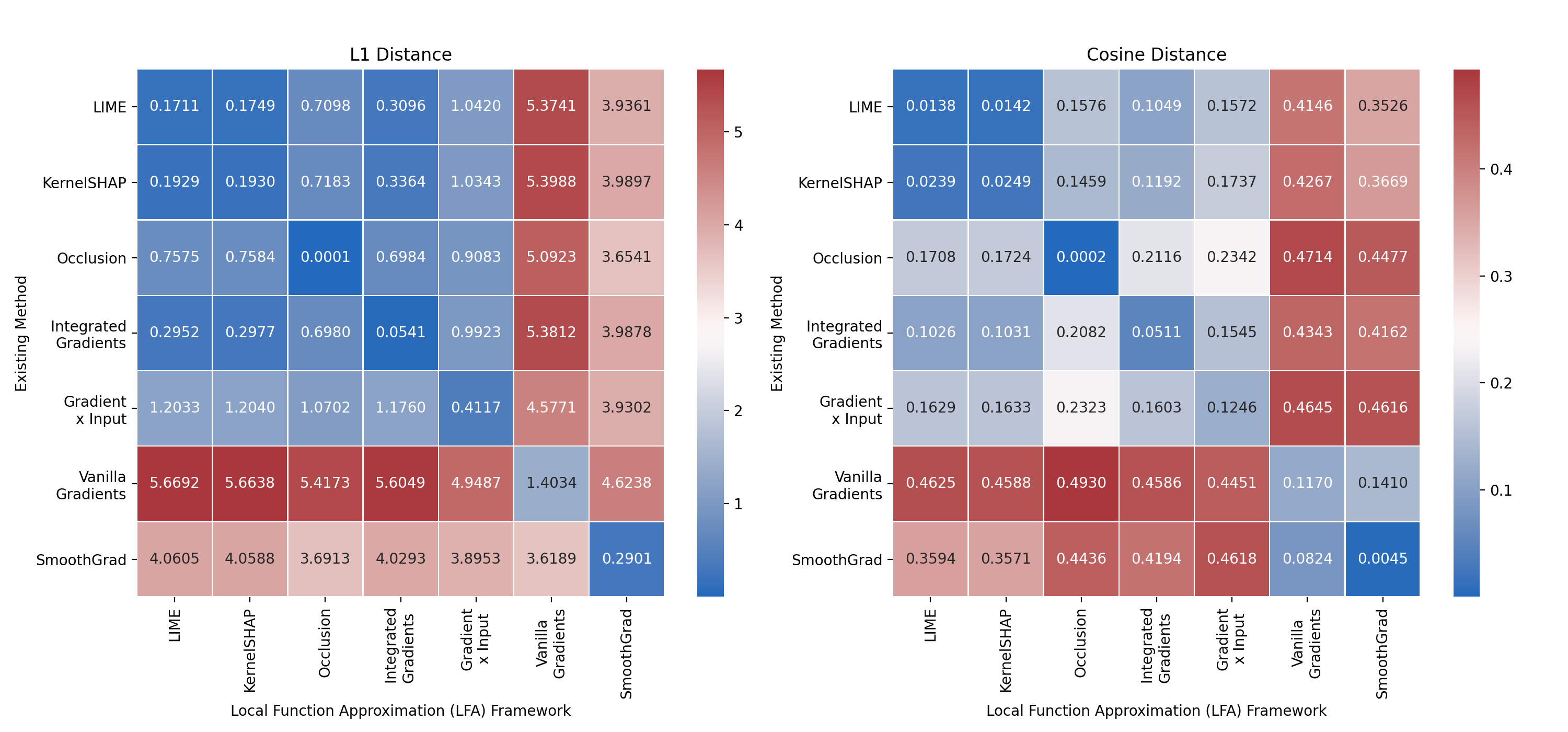}
    \caption{Correspondence of existing methods to instances of the LFA framework. Experiments performed on the HELOC dataset for logistic regression (Row 1), NNA (Row 2), NNB (Row 3), and NNC (Row 4). The similarity of pairs of explanations are measured based on L1 distance (left column) and cosine distance (right column).}
\end{figure}

% HELOC, vanilla gradients
\begin{figure}[H]
    \centering
    \includegraphics[width=0.6\textwidth]{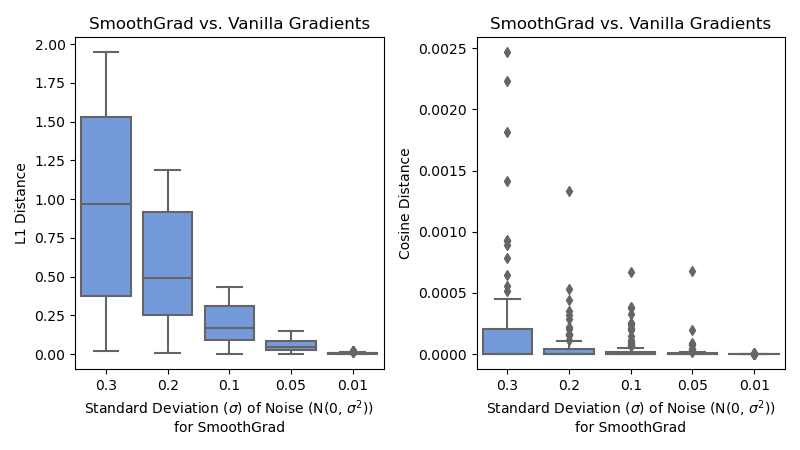}\vfill
    \includegraphics[width=0.6\textwidth]{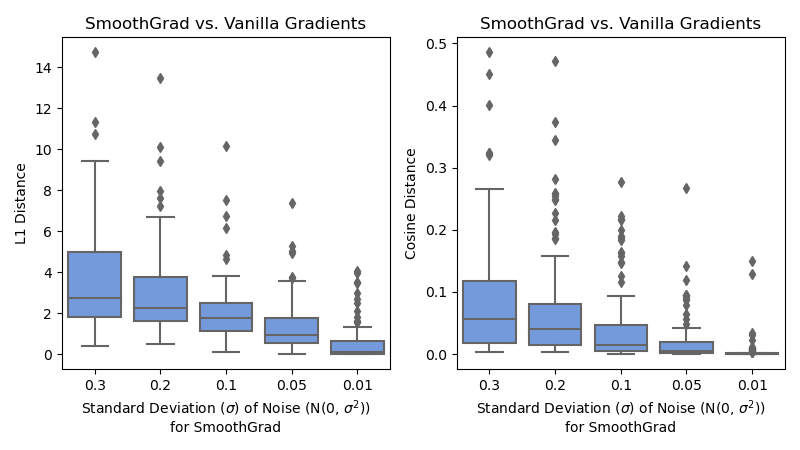}\vfill
    \includegraphics[width=0.6\textwidth]{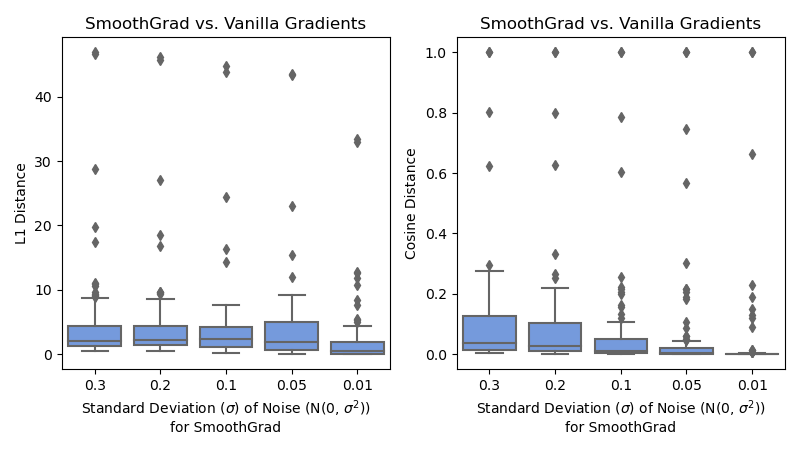}\vfill
    \includegraphics[width=0.6\textwidth]{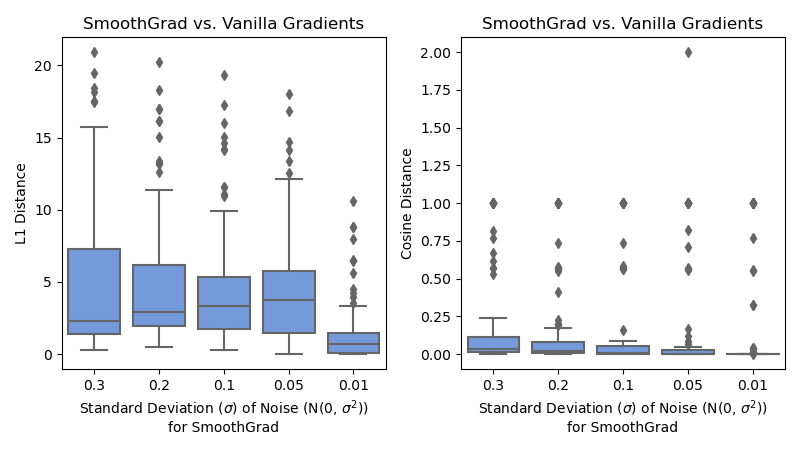}
    \caption{Using the LFA framework, explanations generated by SmoothGrad converge to those generated by Vanilla Gradients. Experiments performed on the HELOC dataset for logistic regression (Row 1), NNA (Row 2), NNB (Row 3), and NNC (Row 4). The similarity of pairs of explanations are measured based on L1 distance (left column) and cosine distance (right column).}
\end{figure}

% HELOC, gradient x input
\begin{figure}[H]
    \centering
    \includegraphics[width=0.6\textwidth]{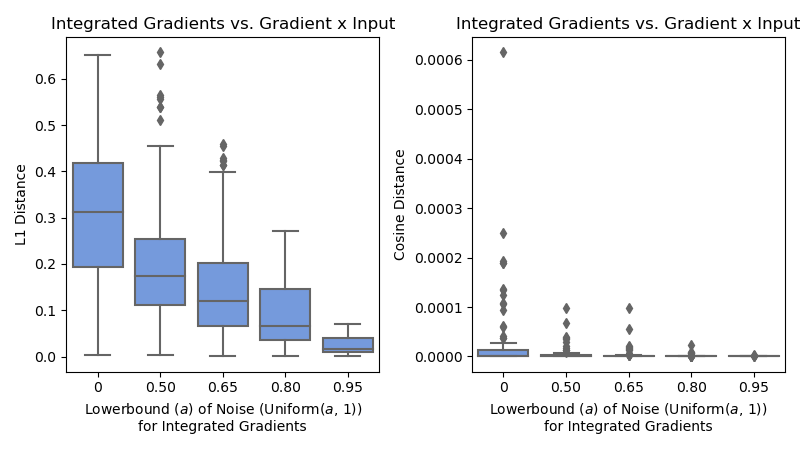}\vfill
    \includegraphics[width=0.6\textwidth]{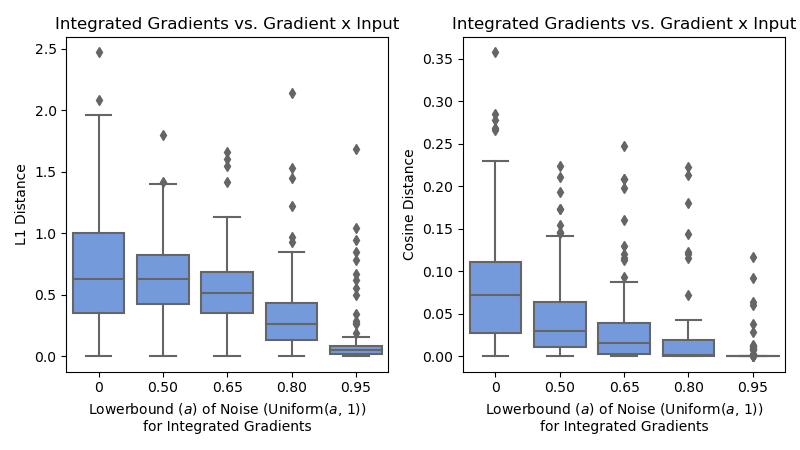}\vfill
    \includegraphics[width=0.6\textwidth]{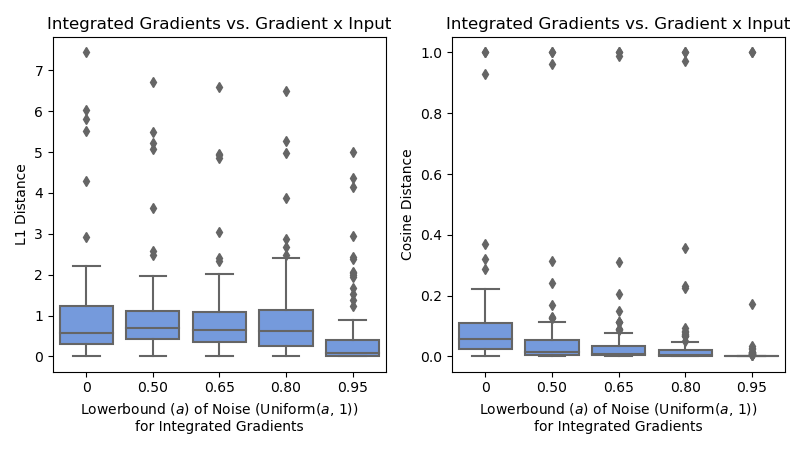}\vfill
    \includegraphics[width=0.6\textwidth]{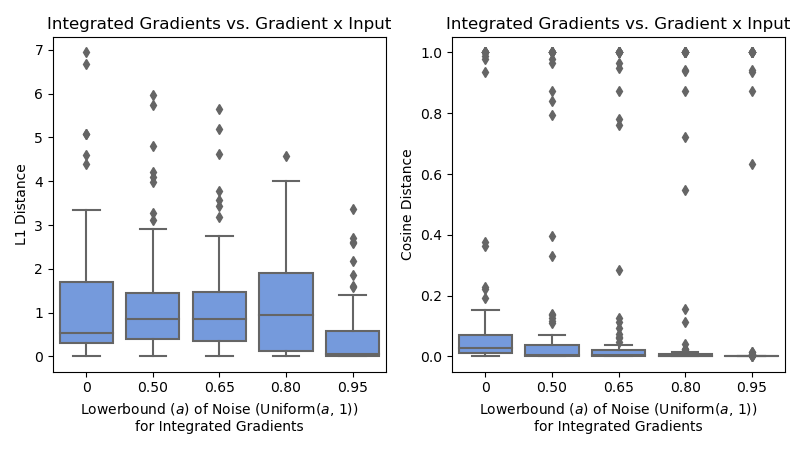}
    \caption{Using the LFA framework, explanations generated by Integrated Gradients converge to those generated by Gradient $\times$ Input. Experiments performed on the HELOC dataset for logistic regression (Row 1), NNA (Row 2), NNB (Row 3), and NNC (Row 4). The similarity of pairs of explanations are measured based on L1 distance (left column) and cosine distance (right column).}
\end{figure}

\newpage
\subsubsection{Experiment 2: $g$'s recovery of $f$}
\label{app:exp2-recovery-all}

%WHO
\begin{figure}[h]
    \centering
    \includegraphics[width=0.8\textwidth]{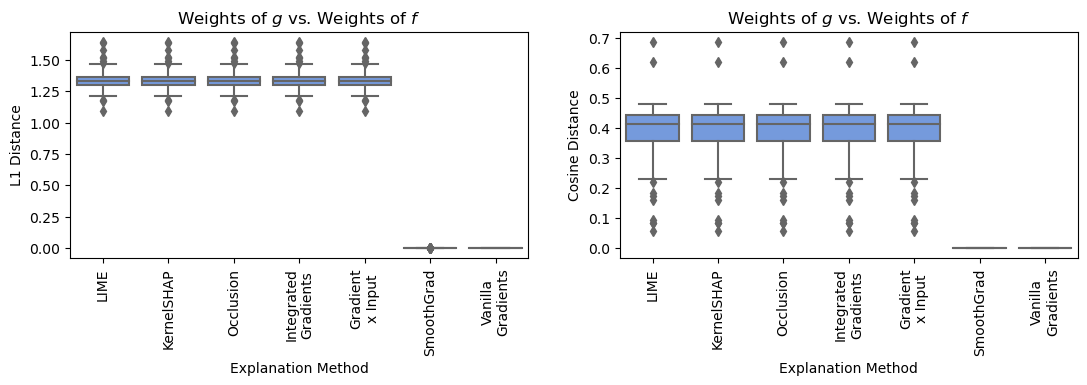}\vfill
    \includegraphics[width=0.8\textwidth]{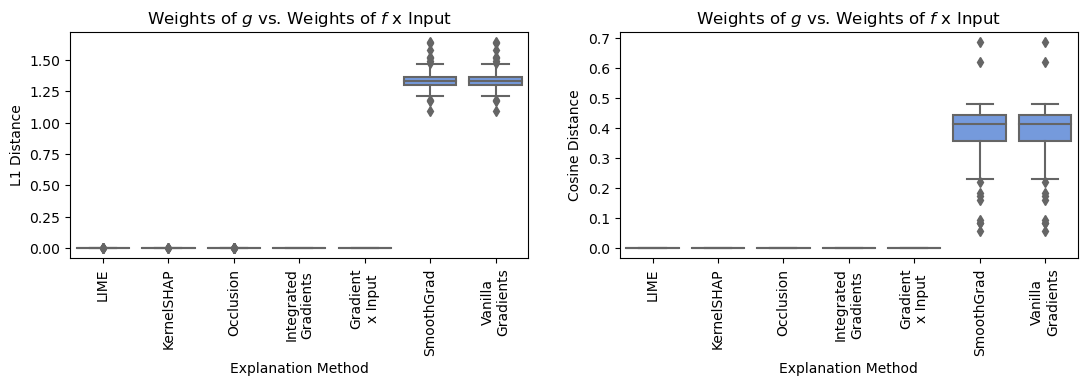}
    \caption{Analysis of $g$'s recovery of $f$ using a linear regression model trained on the WHO dataset. $g$'s weights are compared with $f$'s weights (top row) or $f$'s weights multiplied by the input (bottom row) based on L1 distance (left column) or cosine distance (right column).}
\end{figure}

%HELOC
\begin{figure}[h]
    \centering
    \includegraphics[width=0.8\textwidth]{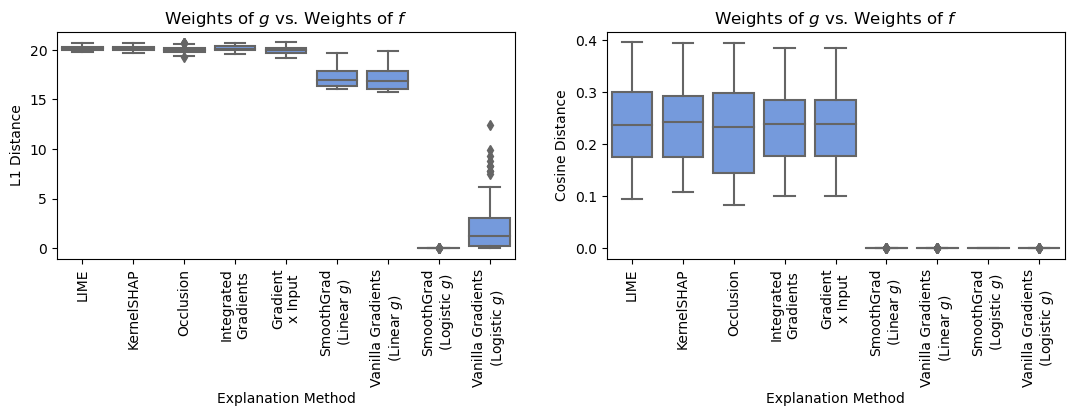}\vfill
    \includegraphics[width=0.8\textwidth]{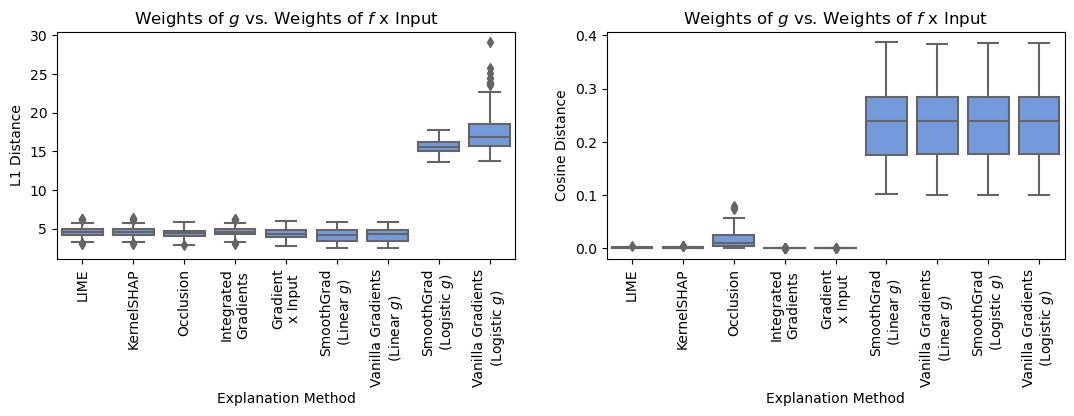}
    \caption{Analysis of $g$'s recovery of $f$ using a logistic regression model trained on the HELOC dataset. $g$'s weights are compared with $f$'s weights (top row) or $f$'s weights multiplied by the input (bottom row) based on L1 distance (left column) or cosine distance (right column).}
\end{figure}

\newpage
\subsubsection{Experiment 3: Perturbation Tests}
\label{app:exp3-p-test-all}

% WHO, bottom-k
\begin{figure}[h]
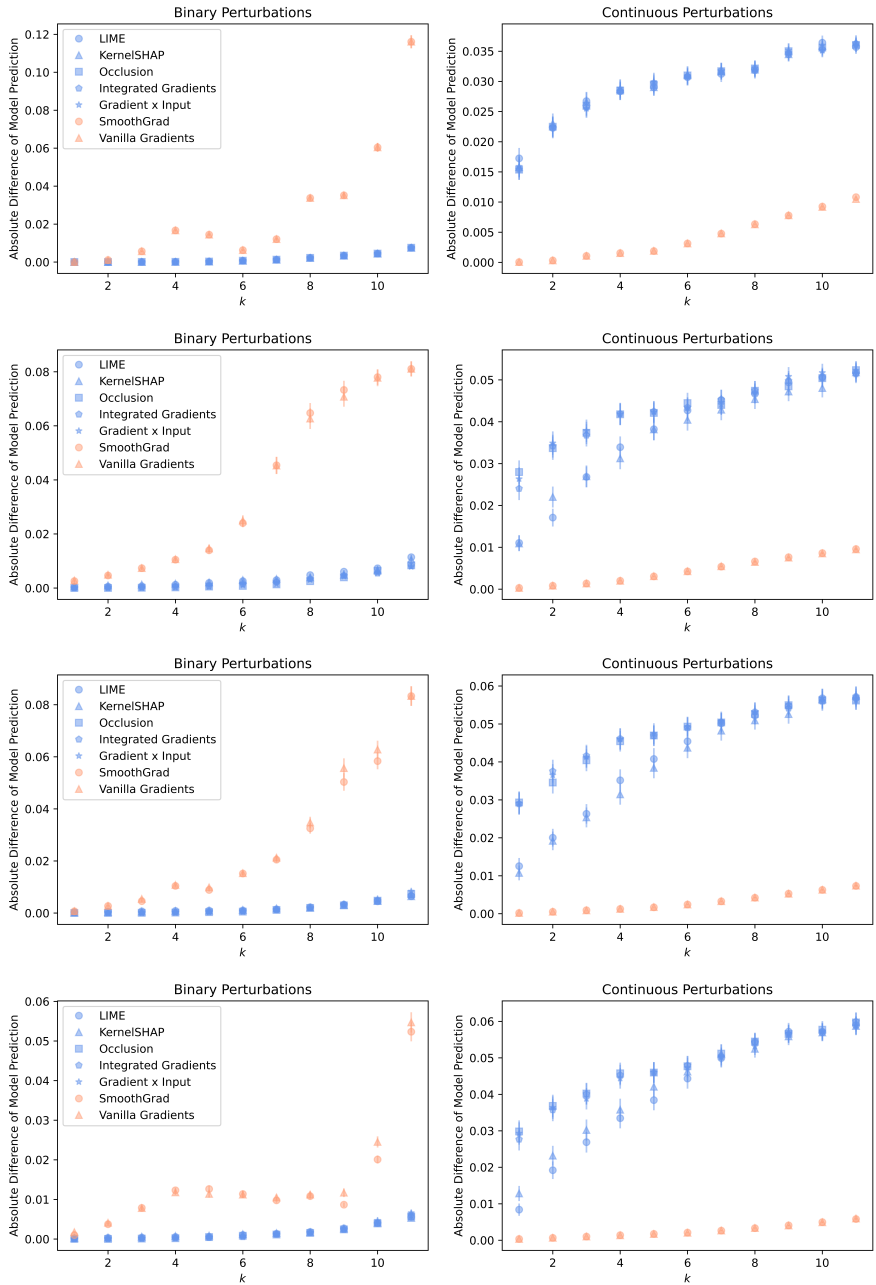

    \centering
    \includegraphics[width=\textwidth]{images/exp3-bottomk/exp3_who_linear.png}\vfill
    \includegraphics[width=\textwidth]{images/exp3-bottomk/exp3_who_ffnnA.png}\vfill
    \includegraphics[width=\textwidth]{images/exp3-bottomk/exp3_who_ffnnB.png}\vfill
    \includegraphics[width=\textwidth]{images/exp3-bottomk/exp3_who_ffnnC.png}
    \caption{Perturbation tests based on bottom-$k$ features using binary noise (left column) or continuous noise (right column) performed on the WHO dataset for linear regression (Row 1), NN1 (Row 2), NN2 (Row 3), and NN3 (Row 4). The lower the curve, the better a method identifies unimportant features. (Note: Row 2 is a duplicate of Figure~\ref{fig:exp3-ptest}).}
\end{figure}

% HELOC, bottom-k
\begin{figure}
    \centering
    \includegraphics[width=\textwidth]{images/exp3-bottomk/exp3_heloc_logistic.png}\vfill
    \includegraphics[width=\textwidth]{images/exp3-bottomk/exp3_heloc_ffnnA.png}\vfill
    \includegraphics[width=\textwidth]{images/exp3-bottomk/exp3_heloc_ffnnB.png}\vfill
    \includegraphics[width=\textwidth]{images/exp3-bottomk/exp3_heloc_ffnnC.png}
    \caption{Perturbation tests based on bottom-$k$ features using binary noise (left column) or continuous noise (right column) performed on the HELOC dataset for logistic regression (Row 1), NNA (Row 2), NNB (Row 3), and NNC (Row 4). The lower the curve, the better a method identifies unimportant features.}
\end{figure}

% WHO, top-k
\begin{figure}[h]
    \centering
    \includegraphics[width=\textwidth]{images/exp3-topk/exp3_who_linear.png}\vfill
    \includegraphics[width=\textwidth]{images/exp3-topk/exp3_who_ffnnA.png}\vfill
    \includegraphics[width=\textwidth]{images/exp3-topk/exp3_who_ffnnB.png}\vfill
    \includegraphics[width=\textwidth]{images/exp3-topk/exp3_who_ffnnC.png}
    \caption{Perturbation tests based on top-$k$ features using binary noise (left column) or continuous noise (right column) performed on the WHO dataset for linear regression (Row 1), NN1 (Row 2), NN2 (Row 3), and NN3 (Row 4). The higher the curve, the better a method identifies important features.}
\end{figure}

% HELOC, top-k
\begin{figure}
    \centering
    \includegraphics[width=\textwidth]{images/exp3-topk/exp3_heloc_logistic.png}\vfill
    \includegraphics[width=\textwidth]{images/exp3-topk/exp3_heloc_ffnnA.png}\vfill
    \includegraphics[width=\textwidth]{images/exp3-topk/exp3_heloc_ffnnB.png}\vfill
    \includegraphics[width=\textwidth]{images/exp3-topk/exp3_heloc_ffnnC.png}
    \caption{Perturbation tests based on top-$k$ features using binary noise (left column) or continuous noise (right column) performed on the HELOC dataset for logistic regression (Row 1), NNA (Row 2), NNB (Row 3), and NNC (Row 4). The higher the curve, the better a method identifies important features.}
\end{figure}

\end{document}